\definecolor{darkred}{RGB}{150,0,0}
\definecolor{darkgreen}{RGB}{0,150,0}
\definecolor{darkblue}{RGB}{0,0,200}
\newtheorem{theorem}{Theorem}[section]
\newtheorem{lemma}[theorem]{Lemma}
\newtheorem{definition}[theorem]{Definition}
\newtheorem{remark}[theorem]{Remark}
\newcommand{\dist}{\mathcal{D}}
\newcommand{\reals}{\mathbb{R}}
\newcommand{\X}{\mathcal{X}}
\newcommand{\Gt}{\vct{g}_t}
\newcommand{\Dphi}{D_{\Phi}}
\newcommand{\z}{{\mtx{z}}}
\newcommand{\y}{\vct{y}}
\newcommand{\C}{\mathcal{K}}
\newcommand{\x}{\vct{x}}
\newcommand{\twonorm}[1]{\left\|#1\right\|_{\ell_2}}
\newcommand{\R}{\mathbb{R}}
\newcommand{\Z}{\mathbb{Z}}
\newcommand{\E}{\operatorname{\mathbb{E}}}
\newcommand{\e}{\mathrm{e}}
\newcommand{\vct}[1]{\bm{#1}}
\newcommand{\mtx}[1]{\bm{#1}}
\definecolor{ejc}{RGB}{0,0,255}
\numberwithin{equation}{section} 
\def \endprf{\hfill {\vrule height6pt width6pt depth0pt}\medskip}
\newenvironment{proof}{\noindent {\bf Proof} }{\endprf\par}
\newcommand*\samethanks[1][\value{footnote}]{\footnotemark[#1]}
\title{Gradient Methods for Submodular Maximization}
\author{Hamed Hassani\thanks{H. Hassani and M. Soltanolkotabi contributed equally.}~\thanks{Department of Electrical and Systems Engineering, University of Pennsylvania, Philadelphia, PA}\quad Mahdi Soltanolkotabi\samethanks[1]~\thanks{Ming Hsieh Department of Electrical Engineering, University of Southern California, Los Angeles, CA} \quad  Amin Karbasi\thanks{Departments of Electrical Engineering and Computer Science, Yale University, New Haven, CT}}
\date{May, 2017}
\begin{document}
\maketitle
\begin{abstract}
In this paper, we study the problem of maximizing continuous submodular functions that naturally arise in many learning applications such as those involving utility functions in active learning and sensing, matrix approximations and network inference. Despite the apparent lack of convexity in such functions, we prove that stochastic projected gradient methods can provide strong approximation guarantees for maximizing continuous submodular functions with convex constraints. More specifically, we prove that for monotone continuous DR-submodular functions, all fixed points of projected gradient ascent provide a factor $1/2$ approximation to the global maxima. We also study stochastic gradient and mirror methods and show that after $\mathcal{O}(1/\epsilon^2)$ iterations these methods reach solutions which achieve in expectation objective values exceeding $(\frac{\text{OPT}}{2}-\epsilon)$. 
An immediate application of our results is to maximize submodular functions that are defined stochastically, i.e. the submodular function is defined as an expectation over a family of submodular functions with an unknown distribution. We will show how stochastic gradient methods are naturally well-suited for this setting, leading to a factor $1/2$ approximation when the function is monotone. In particular, it allows us to approximately maximize discrete, monotone submodular optimization problems via projected gradient descent on a continuous relaxation, directly connecting the discrete and continuous domains. Finally, experiments on real data demonstrate that our projected gradient methods consistently achieve the best utility compared to other continuous baselines while remaining competitive in terms of computational effort.
   

%
%
%
\end{abstract}

\section{Introduction}\label{sec:intro}
Submodular set functions exhibit a natural diminishing returns property, resembling concave functions in continuous domains. At the same time, they can be minimized exactly in polynomial time (while can only be   maximized approximately), which makes them similar to convex functions. They have found  numerous applications in machine learning, including  viral marketing \cite{kempe03}, dictionary learning \cite{das2011submodular} network monitoring \cite{leskovec07, gomez10}, sensor placement  \cite{guestrin2005near}, product recommendation \cite{el2009turning, mirzasoleimanfast},  document and corpus summarization \cite{lin2011class} data summarization \cite{mirzasoleiman2013distributed}, crowd teaching \cite{singla2014near, KimKK16}, and probabilistic models \cite{djolonga2014map, iyer2015submodular}. However, submodularity  is in general a property that goes  beyond 
set functions and can be defined for continuous functions. In this paper, we consider the following \textit{stochastic} continuous submodular  optimization problem:
\begin{equation}\label{eq:mainproblem}
\max_{x\in\C}F(x) \doteq \E_{\theta\sim \dist} [F_\theta(x)],
\end{equation}
where $\C$ is a bounded convex body, $\dist$ is generally an \textit{unknown} distribution, and $F_\theta$'s are are continuous submodular functions for every $\vct{\theta}\in\mathcal{D}$. Such a setting has recently been introduced in \cite{karimi17}. We also denote the optimum value as $\text{OPT}\triangleq \max_{\x \in \mathcal{K}} F(\x)$.  We note that the function $F(x)$ is itself also continuous submodular as a non-negative combination of  submodular functions are still submodular \cite{bach2015submodular}. The formulation covers popular instances of submodular optimization. For instance, when $\dist$ puts all the probability mass on a single function, \eqref{eq:mainproblem} reduces to \textit{deterministic} continuous submodular  optimization.  Another  common objective is the \textit{finite-sum} continuous submodular  optimization  where $\dist$ is uniformly distributed over $m$ instances, i.e., $F(x) \doteq \frac{1}{m}\sum_{\theta=1}^m F_\theta(x)$. 

A natural approach to solving problems of the form \eqref{eq:mainproblem} is to use projected stochastic methods. 
As we shall see in Section~\ref{sec:experiments}, these local search heuristics are surprisingly effective. However, the reasons for this empirical success is completely unclear. The main challenge is that maximizing $F$ corresponds to a nonconvex optimization problem (as the function $F$ is not concave), and a priori it is not clear why gradient methods should yield a reliable solution. This leads us to the main challenge of this paper
\begin{quote}
Do projected gradient  methods lead to \textit{provably good solutions} for  continuous submodular maximization with general convex constraints?
\end{quote}
We answer the above question in  the affirmative, proving that projected gradient methods produce a competitive solution with respect to the optimum.   
%
%
More specifically, given a general bounded convex body $\C$  and a continuous  function $F$ that is monotone, smooth, and  (weakly) DR-submodular we show that 
\begin{itemize}
\item All stationary points of a DR-submodular function $F$ over $\C$ provide a $1/2$ approximation to the global maximum. Thus, projected gradient methods with sufficiently small step sizes (a.k.a.~gradient flows) always lead to a solutions with $1/2$ approximation guarantees.
\item  Projected gradient ascent after $O\left(\frac{L_2}{\epsilon}\right)$ iterations produces a solution with objective value larger than $(\text{OPT}/2-\epsilon)$. When calculating the gradient is difficult but an unbiased estimate can be easily obtained,  the stochastic projected gradient ascent in $O\left(\frac{L_2}{\epsilon}+\frac{\sigma^2}{\epsilon^2}\right)$ iterations finds a solution with objective value exceeding $(\text{OPT}/2-\epsilon)$. Here, 
$L_2$ is the smoothness of the continuous submodular function measured in the $\ell_2$-norm, $\sigma^2$ is the variance of the stochastic gradient with respect to the true gradient and OPT is the function value at the global optimum.
%
%
\item Projected mirror ascent after  $O\left(\frac{L_*}{\epsilon}\right)$ iterations produces a solution with objective value larger than$(\text{OPT}/2-\epsilon)$. Similarly, the stochastic projected mirror ascent in $O\left(\frac{L_*}{\epsilon}+\frac{\sigma^2}{\epsilon^2}\right)$ iterations finds a solution with objective value exceeding $(\text{OPT}/2-\epsilon)$. Crucially,  $L_*$ indicates the smoothness of the continuous submodular function measured in any norm (e.g., $\ell_1$) that can be substantially smaller than the $\ell_2$ norm. 
\item More generally, for weakly continuous DR-submodular functions with parameter $\gamma$  (define in \eqref{eq:weaksub}) we prove the above results with $\gamma^2/(1+\gamma^2)$ approximation guarantee. 

%
\end{itemize}
Our result have some important implications. First, they show that projected gradient methods are an efficient way of maximizing the multilinear extension of (weakly) submodular set functions for any submodularity ratio $\gamma$ (note that $\gamma=1$ corresponds to submodular functions) \cite{das2011submodular}. Second, in contrast to conditional gradient methods for submodular maximization that should always start from the origin \cite{calinescu11maximizing, bian2016guaranteed}, projected gradient methods can start from any initial point in the constraint set $\C$ and still produce a competitive solution. Third, such conditional gradient methods, when applied to the stochastic setting (with a fixed batch size), perform poorly and can produce arbitrarily bad solutions when applied to continuous submodular functions (see Appendix \ref{bad} for an example and further discussion on why conditional gradient methods do not easily admit stochastic variants).  In contrast, stochastic projected gradient methods are stable  by design and  provide a  solution with $1/2$ guarantee in expectation. Finally, our work provides a unifying approach for solving the \textit{stochastic submodular maximization problem} \cite{karimi17}
\begin{equation}\label{eq:stochsub}
f(S) \doteq \E_{\theta\sim\dist}[f_\theta(S)],
\end{equation}
where the functions $f_\theta:2^V\rightarrow\reals_+$ are submodular set functions defined over the ground set $V$.   Such objective functions naturally arise in many data summarization applications \cite{serban17} and have been recently introduced and studied in \cite{karimi17}. 
Since $\dist$ in unknown, problem~\eqref{eq:stochsub} cannot be directly solved. Instead, \cite{karimi17} showed that in the case of coverage functions, it is possible to efficiently maximize $f$ by lifting the problem to the continuous domain and  using stochastic gradient methods on a continuous relaxation to reach a solution that is within a factor $(1-1/e)$ of the optimum. In contrast, our work provides a general recipe with $1/2$ approximation guarantee for problem~\eqref{eq:stochsub} in which $f_\theta$'s can be any monotone submodular function. 

\section{Continuous submodular maximization}\label{sec:problem}
A set function $f:2^V\rightarrow \reals_+$, defined on the ground set $V$,  is called submodular if for all subsets $A,B\subseteq V$, we have $$f(A)+f(B)\geq f(A\cap B) + f(A\cup B).$$ Even though submodularity is mostly considered on discrete domains, the notion can
be naturally extended to arbitrary lattices \cite{fujishige91}. To this aim, let us consider a subset of $\reals_+^n$ of the form $\X = \prod_{i=1}^n \X_i$ where each $\X_i$ is a compact subset of $\reals_+$. A function $F:\X\rightarrow \reals_+$ is  \textit{submodular} if for all $(\vct{x},\y)\in \X \times \X$, we have
\begin{equation}\label{eq:sub1}
F(\x)+ F(\y) \geq F(\x\vee \y) + F(x\wedge \y),
\end{equation}
where $\x\vee \y \doteq\max(\x,\y)$ (component-wise) and $\x\wedge \y \doteq \min (\x,\y)$ (component-wise). A submodular function is monotone if for any $\x,\y\in\X$ such that $\x\leq \y$, we have $F(\x)\leq F(\y)$ (here, by $\x \leq \y$ we mean that every element of $\x$ is less than that of $\y$). Like set functions, we can define submodularity in an equivalent way, reminiscent of diminishing returns,  as follows \cite{bach2015submodular}: the function $F$ is submodular if for  any $\x \in \X$ and two distinct basis vectors  $\e_i, \e_j \in \reals^n$ and two non-negative real numbers $z_i, z_j\in \reals_+$, such that $\x_i+z_i\in\X_i$ and $\x_j+z_j\in\X_j$, then
\begin{equation}\label{eq:sub2}
F(\x+z_i\e_i) + F(\x+z_j\e_j)\geq F(\x) + F(\x+z_i\e_i+z_j\e_j). 
\end{equation}
Clearly, the above definition includes submodularity over a set (by restricting $\X_i$'s to  $\{0,1\}$) or over  an integer lattice (by restricting $\X_i$'s to $\Z_+$) as special cases. However, in the remaining of this paper we consider \textit{continuous} submodular functions defined on product of sub-intervals of $\reals_+$.  When  twice differentiable, $F$ is submodular if and only if all cross-second-derivatives are non-positive \cite{bach2015submodular}, i.e., 
\begin{equation}\label{eq:twice}
\forall i\neq j, \forall \x\in \X, ~~ \frac{\partial^2 F(\vct{x})}{\partial x_i \partial x_j} \leq 0.
\end{equation}
The above expression makes it clear that continuous submodular functions are not convex nor concave in general as concavity (convexity) implies that $\nabla^2 F\preceq 0$ (resp.$\bigtriangledown^2 F \succeq 0$). Indeed, we can have functions that are both submodular and convex/concave. For instance, for a concave function $g$ and non-negative weights $\lambda_i\geq 0$, the function $F(\x) = g(\sum_{i=1}^n \lambda_i x_i)$ is  submodular and concave. Trivially, affine functions are submodular, concave, and convex. A proper subclass of submodular functions are called \textit{DR-submodular} \cite{bian2016guaranteed, soma2015generalization} if for all $\x,\y\in\X$ such that $\x\leq \y$  and any standard basis vector $\e_i\in\reals^n$ and a non-negative number $z\in\reals_+$ such that $z\e_i+\x\in\X$ and $z\e_i+\y\in\X$, then,
\begin{equation}\label{eq:drsub}
F(z\e_i+\x)-F(\x)\geq F(z\e_i+\y)-F(\y). 
\end{equation} 
One can easily verify that for a differentiable DR-submodular functions the gradient is an antitone  mapping, i.e., for all $\x,\y \in \X$ such that $\x\leq \y$ we have  $\nabla F(\x) \geq \nabla F(\y)$ \cite{bian2016guaranteed}.   When twice differentiable, DR-submodularity is equivalent to 
\begin{equation}\label{eq:twicedr}
\forall i~ \&~ j, \forall \x\in \X, ~~ \frac{\partial^2 F(\vct{x})}{\partial x_i \partial x_j} \leq 0.
\end{equation}
The above twice differentiable functions are sometimes called \textit{smooth} submodular functions in the literature \cite{chekuri2015multiplicative}. However, in this paper, we say a differentiable submodular function $F$ is $L$-\textit{smooth} w.r.t a norm $\Vert\cdot\Vert$ (and its dual norm $\Vert\cdot\Vert_*$)  if for all $\x,\y\in\X$ we have 
$$\Vert\nabla F(\x)- \nabla F(\x)\Vert_*\leq L \Vert \x-\y\Vert.$$
Here, $\|\cdot\|_{*}$ is the \textit{dual norm} of $\|\cdot\|$ defined as $\|\vct{g}\|_{*}=\sup_{\vct{x}\in\R^n:\text{ }\|\vct{x}\|\le 1} \vct{g}^T\vct{x}.$
 When the function is smooth w.r.t the $\ell_2$-norm we use $L_2$ (note that the $\ell_2$ norm is self-dual).  We say that a function is \textit{weakly DR-submodular} with  parameter $\gamma$ if
\begin{equation}\label{eq:weaksub}
\gamma = \inf_{\substack{x,y\in\X\\\x\leq \y}}\inf_{i\in[n]}\frac{[\nabla F(\x)]_i}{[\nabla F(\y)]_i}.
\end{equation}
Clearly, for a differentiable DR-submodular function we have $\gamma=1$. An important example of a DR-submodular function is the multilinear extension \cite{calinescu11maximizing} $F: [0,1]^n\rightarrow \reals$ of a discrete submodular function $f$, namely,  $$F(\x) = \sum_{S\subseteq V} \prod_{i\in S} x_i \prod_{j\not\in S} (1-x_j) f(S).$$   
We note that for set functions, DR-submodularity (i.e., Eq.~\ref{eq:drsub}) and submodularity (i.e., Eq.~\ref{eq:sub1}) are equivalent. However, this is not true for the general submodular functions defined on integer lattices or product of sub-intervals \cite{bian2016guaranteed, soma2015generalization}.

The focus of this paper is on  continuous submodular maximization defined in Problem~\eqref{eq:mainproblem}. More specifically, we assume that $\C\subset \X$ is a a general bounded convex set (not necessarily down-closed as considered in \cite{bian2016guaranteed}) with diameter $R$. Moreover, we consider $F_\theta$'s to be monotone  (weakly) DR-submodular functions with parameter $\gamma$. 
%

\section{Background and related work}\label{sec:related}
Submodular set functions \cite{edmonds1971matroids, fujishige91} originated in combinatorial optimization
and operations research, but they have recently attracted significant interest in machine learning. Even though they are  usually considered  over discrete domains, their optimization is inherently related to continuous optimization methods. In particular, Lovasz \cite{lovasz1983submodular} showed that Lovasz extension is convex if and only if the corresponding set function is submodular. Moreover, minimizing a submodular set-function  is equivalent to minimizing the Lovasz extension.\footnote{The idea of using stochastic methods for submodular minimization has recently been used in \cite{yantatlee}.} This idea has been recently extended to minimization of strict continuous submodular functions (i.e., cross-order derivatives in \eqref{eq:twice} are strictly negative) \cite{bach2015submodular}. Similarly, approximate submodular maximization is linked to a different continuous extension known as multilinear extension \cite{chekuri11submodular}. Multilinear extension (which is an example of DR-submodular functions studied in this paper) is not concave nor convex in general. However, a variant of conditional gradient method, called \textit{continuous greedy},  can be used to approximately maximize them. Recently, Chekuri et al \cite{chekuri2015multiplicative} proposed an interesting multiplicative weight update algorithm that achieves $(1-1/e-\epsilon)$ approximation guarantee after $\tilde{O}(n^2/\epsilon^2)$ steps for twice differentiable  monotone DR-submodular functions (they are also called smooth submodular functions) subject to a polytope constraint.  Similarly, Bian et al \cite{bian2016guaranteed} proved that  a conditional gradient method, similar to the continuous greedy algorithm, achieves  $(1-1/e-\epsilon)$ approximation guarantee after $O(L_2/\epsilon)$ iterations   for  maximizing a monotone DR-submodular functions subject to  special convex constraints called \textit{down-closed} convex bodies. 
 A few remarks are in order. First, the proposed conditional gradient methods cannot handle the general stochastic setting we consider in Problem~\eqref{eq:mainproblem} (in fact, projection is the key). Second, there is no near-optimality guarantee if  conditional gradient methods do not start from the origin. More precisely, for the continuous greedy algorithm it is necessary to start from the $\vct{0}$ vector (to be able to remain in the convex constraint set at each iteration). Furthermore, the $\vct{0}$ vector must be a feasible point of the constraint set. Otherwise, the iterates of the algorithm may fall out of the convex constraint set leading to an infeasible final solution. Third, due to the starting point requirement, they can only handle special convex constraints, called down-closed. And finally, the dependency on $L_2$ is very subomptimal as it can be as large as the dimension $n$ (e.g., for the  multilinear extensions of some submodular set functions, see Appendix~\ref{example-smooth}). Our work resolves all of these issues by showing that projected gradient methods can also approximately maximize monotone DR-submodular functions subject to general convex constraints, albeit, with a lower $1/2$ approximation guarantee. 

Generalization of submodular set functions has lately received a lot of attention. For instance, a line of recent work considered DR-submodular function maximization over an integer lattice \cite{soma2014optimal, gottschalk2015submodular, soma2015generalization}. Interestingly, Ene and Nguyen \cite{ene2016reduction} provided an efficient reduction from an integer-lattice DR-submodular to a submodular set function, thus suggesting a simple way to solve integer-lattice DR-submodular maximization. Note that such reductions cannot be applied to the optimization problem~\eqref{eq:mainproblem} as  expressing general  convex body constraints may require solving a continuous optimization problem. 

%
%

\section{Algorithms and main results}\label{sec:results}
In this section we discuss our algorithms together with the corresponding theoretical guarantees. In what follows, we assume that $F$ is a weakly DR-submodular function with parameter $\gamma$.   
\subsection{Characterizing the quality of stationary points} \label{sec:stationary}
We begin with the definition of a stationary point. 
\begin{definition} \label{def:local_maxima}
A vector $\x \in \mathcal{K}$ is called a \textit{stationary point} of a function $F:\mathcal{X} \rightarrow \mathbb{R}_+$ over the set $\mathcal{K}\subset \mathcal{X}$ if $\max_{\vct{y} \in \mathcal{K}} \langle  \nabla F(x), \vct{y} - \vct{x} \rangle \leq 0$. 
\end{definition}
Stationary points are of interest because they characterize the fixed points of the Gradient Ascent (GA) method. Furthermore, (projected) gradient ascent with a sufficiently small step size is known to converge to a stationary point for smooth functions \cite{nesterov2013introductory}. To gain some intuition regarding this connection, let us consider the GA procedure. Roughly speaking, at any iteration $t$ of the GA procedure, the value of $F$ increases (to the first order) by $\langle \nabla F(\x_t), \x_{t+1} - \x_t \rangle$. Hence, the progress at time $t$ is at most $\max_{\y \in \C} \langle  \nabla F(\x_t), \y - \x_t \rangle$.  If at any time $t$ we have  $\max_{\y \in \C} \langle  \nabla F(\x_t), \y - \x_t \rangle \leq 0$, then the GA procedure will not make any progress and it will be stuck once it falls into a stationary point. 

The next natural question is how small can the value of $F$ be at a stationary point compared to the global maximum? The following lemma relates the value of $F$ at a stationary point to OPT.  
\begin{theorem} \label{local_opt}
Let $F: \X \to \mathbb{R}_+$ be monotone and weakly DR-submodular with parameter $\gamma$ and assume $\mathcal{K} \subseteq \X$ is a convex set. Then,
\begin{itemize}
\item [(i)] If $\x$ is a stationary point of $F$ in $\mathcal{K}$, then $F(\x) \geq \frac{\gamma^2}{1 + \gamma^2} \rm{OPT}$. 
\item[(ii)] Furthermore, if $F$ is $L$-smooth, gradient ascent with a step size smaller than $1/L$ will converge to a stationary point.
\end{itemize}
\end{theorem}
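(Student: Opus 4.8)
The plan is to prove part (i) by exploiting two structural facts: that the gradient of a monotone weakly DR-submodular function is (weakly) antitone via \eqref{eq:weaksub}, and that $\X$ is a lattice, so the coordinatewise operations $\x\vee\x^*$ and $\x\wedge\x^*$ remain in $\X$. Let $\x^*$ denote a global maximizer, so $F(\x^*)=\text{OPT}$, and write the displacement toward the optimum as a difference of two nonnegative vectors,
$$\x^* - \x = (\x\vee\x^* - \x) - (\x - \x\wedge\x^*),$$
which holds because $\x\vee\x^* + \x\wedge\x^* = \x + \x^*$ coordinatewise; both summands are nonnegative.

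The core of the argument is to control the directional behavior of $F$ at $\x$ along these two nonnegative directions by integrating $\nabla F$ along the corresponding segments (which stay in $\X$ by convexity). Along the segment from $\x$ up to $\x\vee\x^*$, every point dominates $\x$, so \eqref{eq:weaksub} gives $[\nabla F(\cdot)]_i \leq \gamma^{-1}[\nabla F(\x)]_i$; integrating against the nonnegative direction yields $\langle \nabla F(\x), \x\vee\x^* - \x\rangle \geq \gamma\big(F(\x\vee\x^*) - F(\x)\big)$. Symmetrically, along the segment from $\x\wedge\x^*$ up to $\x$, every point is dominated by $\x$, so $[\nabla F(\cdot)]_i \geq \gamma[\nabla F(\x)]_i$, giving $\langle\nabla F(\x), \x - \x\wedge\x^*\rangle \leq \gamma^{-1}\big(F(\x) - F(\x\wedge\x^*)\big) \leq \gamma^{-1}F(\x)$, where the last step uses $F\geq 0$.

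Next I would combine these with the remaining hypotheses. Stationarity gives $\langle\nabla F(\x), \x^* - \x\rangle \leq 0$, which via the decomposition reads $\langle\nabla F(\x), \x\vee\x^*-\x\rangle \leq \langle\nabla F(\x), \x - \x\wedge\x^*\rangle$. Chaining the two directional estimates produces $\gamma\big(F(\x\vee\x^*) - F(\x)\big) \leq \gamma^{-1}F(\x)$, while monotonicity gives $F(\x\vee\x^*)\geq F(\x^*) = \text{OPT}$ since $\x\vee\x^*\geq\x^*$. Rearranging $\gamma(\text{OPT} - F(\x)) \leq \gamma^{-1}F(\x)$ yields $F(\x)\geq \frac{\gamma^2}{1+\gamma^2}\text{OPT}$, as claimed. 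For part (ii) I would invoke the standard smoothness argument: writing one projected step $\x_{t+1}=\mathcal{P}_{\C}(\x_t + \mu\nabla F(\x_t))$ with $\mu<1/L$, first-order optimality of the projection gives $\langle\nabla F(\x_t), \x_{t+1}-\x_t\rangle \geq \mu^{-1}\|\x_{t+1}-\x_t\|^2$, and $L$-smoothness gives the ascent bound $F(\x_{t+1}) \geq F(\x_t) + \langle\nabla F(\x_t),\x_{t+1}-\x_t\rangle - \tfrac{L}{2}\|\x_{t+1}-\x_t\|^2$; together these force $F(\x_{t+1}) - F(\x_t) \geq (\mu^{-1} - \tfrac{L}{2})\|\x_{t+1}-\x_t\|^2$ with a strictly positive coefficient, so the objective increases monotonically. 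Since $F$ is bounded on the compact $\X$, the increments $\|\x_{t+1}-\x_t\|$ vanish, and any limit point is a fixed point of the projected gradient map, hence a stationary point in the sense of Definition~\ref{def:local_maxima}.

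The main obstacle is establishing the two directional inequalities in the second paragraph and placing the $\gamma$ factors correctly: a factor $\gamma$ in the lower bound along the ascent direction $\x\vee\x^*-\x$, and a factor $\gamma^{-1}$ in the upper bound along the descent direction $\x-\x\wedge\x^*$. It is precisely this asymmetry that makes the two estimates compose into the stated $\gamma^2/(1+\gamma^2)$ guarantee. One should also note that $\x\vee\x^*$ and $\x\wedge\x^*$ need only belong to the lattice $\X$ and not to $\C$, so the argument relies only on monotonicity and submodularity of $F$ over $\X$, using convexity of $\C$ solely to justify the stationarity inequality $\langle\nabla F(\x),\x^*-\x\rangle\leq 0$.
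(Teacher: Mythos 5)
Your proposal is correct and follows essentially the same route as the paper's proof: both parts rest on the same two integral estimates along the lattice directions $\x\vee\x^*-\x$ and $\x-\x\wedge\x^*$ with the asymmetric $\gamma$ and $\gamma^{-1}$ factors, the identity $\x\vee\x^*+\x\wedge\x^*=\x+\x^*$, monotonicity, nonnegativity of $F$, and the standard projection/smoothness argument for part (ii). The only difference is cosmetic: the paper first packages these estimates into the general inequality $F(\y)-(1+\gamma^{-2})F(\x)\le \gamma^{-1}\langle\nabla F(\x),\y-\x\rangle$ and then invokes stationarity, whereas you insert the stationarity inequality in the middle of the chain.
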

The theorem above guarantees that all fixed points of the GA method yield a solution whose function value is at least $\frac{\gamma^2}{1+\gamma^2}$OPT. Thus, all fixed point of GA provide a factor $\frac{\gamma^2}{1+\gamma^2}$ approximation ratio. The particular case of $\gamma = 1$, i.e.,~when $F$ is DR-submodular, asserts that at any stationary point $F$ is at least $\text{OPT}/2$. This lower bound is in fact tight. In Appendix \ref{exmponehalf} we provide a simple instance of a differentiable DR-Submodular function that attains $\text{OPT}/2$ at a stationary point that is also a \textit{local maximum}. 
\subsection{(Stochastic) gradient methods}
We now discuss our first algorithmic approach. For simplicity we focus our exposition on the DR submodular case, i.e.,~$\gamma=1$, and discuss how this extends to the more general case in the proofs (Section \ref{weaksub}). A simple approach to maximizing DR submodular functions is to use the (projected) Gradient Ascent (GA) method. Starting from an initial estimate $\vct{x}_1\in\mathcal{K}$ obeying the constraints, GA iteratively applies the following update
\begin{align}
\label{PGDupdates}
\vct{x}_{t+1}=\mathcal{P}_{\mathcal{K}}\left(\vct{x}_t+\mu_t\nabla F(\vct{x}_t)\right).
\end{align}
Here, $\mu_t$ is the learning rate and $\mathcal{P}_{\mathcal{K}}(\vct{v})$ denotes the Euclidean projection of $\vct{v}$ onto the set $\mathcal{K}$. However, in many problems of practical interest we do not have direct access to the gradient of $F$. In these cases it is natural to use a stochastic estimate of the gradient in lieu of the actual gradient. This leads to the Stochastic Gradient Method (SGM). Starting from an initial estimate $\vct{x}_0\in\mathcal{K}$ obeying the constraints, SGM iteratively applies the following updates
\begin{align}
\label{SGAupdates}
\vct{x}_{t+1}=\mathcal{P}_{\mathcal{K}}\left(\vct{x}_t+\mu_t\vct{g}_t\right).
\end{align}
Specifically, at every iteration $t$, the current iterate $\vct{x}_t$ is updated by adding $\mu_t \vct{g}_t$, where $\vct{g}_t$ is an unbiased estimate of the gradient $\nabla F(\vct{x}_t)$ and $\mu_t$ is the learning rate. The result is then projected onto the set $\mathcal{K}$. We note that when $\vct{g}_t = \nabla F(\vct{x}_t)$, i.e., when there is no randomness in the updates, then the SGM updates \eqref{SGAupdates} reduce to the GA updates \eqref{PGDupdates}. We detail the SGM method in Algorithm~\ref{alg:sga}.
\begin{algorithm}[t]
    \caption{(Stochastic) Gradient Method for Maximizing $F(x)$ over a convex set $\C$}
    \label{alg:sga}
    \begin{algorithmic}
    \STATE {\bfseries Parameters:} Integer $T >0$ and scalars $\eta_t >0$, $t \in [T]$
   \STATE {\bfseries Initialize:}  $\x_1  \in \C$
   \FOR{$t=1$ \textbf{to} $T$}
   \STATE $\y_{t+1} \leftarrow \x_t + \eta_t \vct{g}_t$, 
   \STATE \quad with $\vct{g}_t$ s.t. $\mathbb{E}[\vct{g}_t | \x_t] = \nabla F(\x_t)$ where $\vct{g}_t$ is a random vector s.t. $\mathbb{E}[\vct{g}_t | \x_t] = \nabla F(\x_t)$
   \STATE $\x_{t+1} = \arg\min_{\x \in \C} || \x - \y_{t+1}||_2$
   \ENDFOR 
   \STATE{Pick $\tau$ uniformly at random from $\{1,2,\ldots,T\}$.}
    \STATE{\bfseries Output} $\x_\tau$
    \end{algorithmic}
  \end{algorithm}

As we shall see in our experiments detained in Section~\ref{sec:experiments},  the SGM method is surprisingly effective for maximizing monotone DR-submodular functions. However, the reasons for this empirical success was previously unclear. The main challenge is that maximizing $F$ corresponds to a nonconvex optimization problem (as the function $F$ is not concave), and a priori it is not clear why gradient methods should yield a competitive ratio. Thus, studying gradient methods for such nonconvex problems poses new challenges: 
\begin{quote}
Do (stochastic) gradient methods converge to a stationary point? 
\end{quote}

The next theorem addresses some of these challenges. To be able to state this theorem  let us recall the standard definition of smoothness.  
 We say that a continuously differentiable function $F$ is $L$-smooth (in Euclidean norm) if the gradient $\nabla F$ is $L$-Lipschitz, that is $\twonorm{\nabla F(\vct{x})-\nabla F(\vct{y})}\le L \twonorm{\vct{x}-\vct{y}}.$ We also defined the diameter (in Euclidean norm) as $R^2={\sup}_{\vct{x},\vct{y}\in\mathcal{K}} \frac{1}{2}\twonorm{\vct{x}-\vct{y}}^2$.
%
We now have all the elements in place to state our first theorem.
\begin{theorem}[Stochastic Gradient Method]\label{thm:sgd} 
Let us assume that $F$ is $L$-smooth w.r.t. the Euclidean norm $\twonorm{\cdot}$, monotone and DR-submodular. 
Furthermore, assume that we have access to a stochastic oracle $\Gt$ obeying $$\E[\Gt]=\nabla F(\x_t)\quad\text{and}\quad \E\big[\twonorm{\Gt-\nabla F(\x_t)}^2\big]\le \sigma^2.$$
We run stochastic gradient updates of the form \eqref{SGAupdates} with $\mu_t=\frac{1}{L+\frac{\sigma}{R}\sqrt{t}}$. Let $\tau$ be a random variable taking values in $\{1,2,\ldots,T\}$ with equal probability. Then,
\begin{align}
\label{optguarantee}
\E[F(\x_\tau)] \ge  \frac{\rm{OPT}}{2} -\left(\frac{R^2L+\text{OPT}}{2T}+\frac{R\sigma}{\sqrt{T}}\right).
\end{align}
\end{theorem}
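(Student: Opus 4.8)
The plan is to combine two ingredients: a pointwise ``quality of ascent direction'' inequality coming from DR-submodularity (the engine behind Theorem~\ref{local_opt}), and a stochastic projected-gradient progress bound whose telescoping produces exactly the stated rate. First I would establish the pointwise estimate underlying Theorem~\ref{local_opt}(i): for every $\x\in\C$ with global maximizer $\x^*$,
$$\iprod{\nabla F(\x)}{\x^*-\x}\ge F(\x\vee\x^*)+F(\x\wedge\x^*)-2F(\x)\ge\text{OPT}-2F(\x),$$
where the first inequality integrates the directional derivative separately along the nonnegative and nonpositive parts of $\x^*-\x$ and uses antitonicity of $\nabla F$, and the second uses monotonicity ($F(\x\vee\x^*)\ge F(\x^*)=\text{OPT}$) together with $F\ge0$. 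Rearranged, $F(\x)\ge\tfrac12\big(\text{OPT}-\iprod{\nabla F(\x)}{\x^*-\x}\big)$; averaging over the iterates and using $\E[F(\x_\tau)]=\tfrac1T\sum_t\E[F(\x_t)]$ (as $\tau$ is uniform) reduces the whole theorem to upper bounding $\sum_t\E\iprod{\nabla F(\x_t)}{\x^*-\x_t}$.

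\textbf{Per-step progress.} Next I would derive a one-step inequality by feeding the projection into the smoothness (descent) bound. Writing $\y_{t+1}=\x_t+\mu_t\Gt$, optimality of $\x_{t+1}=\Pc_\C(\y_{t+1})$ gives $\iprod{\y_{t+1}-\x_{t+1}}{\x^*-\x_{t+1}}\le0$, which after the polarization identity yields
$$\mu_t\iprod{\Gt}{\x^*-\x_{t+1}}\le\tfrac12\big(\twonorm{\x_t-\x^*}^2-\twonorm{\x_{t+1}-\x^*}^2-\twonorm{\x_{t+1}-\x_t}^2\big).$$
Combining with $L$-smoothness in the form $\iprod{\nabla F(\x_t)}{\x_{t+1}-\x_t}\le F(\x_{t+1})-F(\x_t)+\tfrac L2\twonorm{\x_{t+1}-\x_t}^2$, splitting $\nabla F(\x_t)=\Gt+(\nabla F(\x_t)-\Gt)$, and using that $\mu_t\le1/L$ makes the coefficient of $\twonorm{\x_{t+1}-\x_t}^2$ nonpositive, I obtain
$$\iprod{\nabla F(\x_t)}{\x^*-\x_t}\le\frac{\twonorm{\x_t-\x^*}^2-\twonorm{\x_{t+1}-\x^*}^2}{2\mu_t}+\big(F(\x_{t+1})-F(\x_t)\big)+\iprod{\nabla F(\x_t)-\Gt}{\x^*-\x_{t+1}}.$$

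\textbf{Decoupling and summation.} I would then take expectations and sum over $t$. The quadratic differences telescope by Abel summation; since $1/\mu_t=L+\tfrac{\sigma}{R}\sqrt t$ is increasing and $\twonorm{\x_t-\x^*}^2\le2R^2$, this sum is bounded by $R^2/\mu_T=R^2L+R\sigma\sqrt T$. The function-value differences telescope to $F(\x_{T+1})-F(\x_1)\le\text{OPT}$, the source of the $\text{OPT}$ term in the numerator. The delicate term is the noise inner product, which does not vanish in expectation because $\x_{t+1}$ is correlated with $\Gt$; I would decouple it against the noise-free iterate $\bar\x_{t+1}=\Pc_\C(\x_t+\mu_t\nabla F(\x_t))$, which is measurable given $\x_t$: writing $\x^*-\x_{t+1}=(\x^*-\bar\x_{t+1})+(\bar\x_{t+1}-\x_{t+1})$, the first part has zero conditional mean, while nonexpansiveness gives $\twonorm{\bar\x_{t+1}-\x_{t+1}}\le\mu_t\twonorm{\nabla F(\x_t)-\Gt}$, so the remainder is at most $\mu_t\E\twonorm{\nabla F(\x_t)-\Gt}^2\le\mu_t\sigma^2$. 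Since $\mu_t\le\tfrac{R}{\sigma}t^{-1/2}$ and $\sum_{t\le T}t^{-1/2}\le2\sqrt T$, this contributes $O(R\sigma\sqrt T)$. Dividing the resulting bound $\sum_t\E\iprod{\nabla F(\x_t)}{\x^*-\x_t}\le R^2L+\text{OPT}+O(R\sigma\sqrt T)$ by $2T$ and inserting into the averaged Step~1 inequality gives the claim.

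\textbf{Main obstacle.} The crux is precisely the stochastic coupling in the noise term: a naive attempt to discard it as a martingale increment fails, and bounding it by Cauchy--Schwarz alone reintroduces the full second moment $\E\twonorm{\Gt}^2$, which is not controlled by the hypotheses. The auxiliary noise-free iterate $\bar\x_{t+1}$ combined with nonexpansiveness is what rescues the argument and, crucially, keeps only the variance $\sigma^2$ in the final rate. The remaining task of pinning down the exact constant multiplying $R\sigma/\sqrt T$ is routine bookkeeping of the Abel summation and the $\sum t^{-1/2}$ estimate.
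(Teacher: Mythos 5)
Your proposal is essentially correct and reaches the stated rate, but it handles the one genuinely delicate step --- the correlation between the noise $\Gt-\nabla F(\x_t)$ and the iterate $\x_{t+1}$ --- by a different device than the paper, and this is worth comparing. The paper proves Theorem~\ref{thm:sgd} as the Euclidean special case ($\Phi(\x)=\tfrac12\twonorm{\x}^2$) of the mirror-method proof: there the noise is paired with $\x_{t+1}-\x_t$ inside the smoothness bound and killed by Young's inequality, $\iprod{\nabla F(\x_t)-\Gt}{\x_{t+1}-\x_t}\ge -\tfrac{\eta_t}{2}\twonorm{\nabla F(\x_t)-\Gt}^2-\tfrac{1}{2\eta_t}\twonorm{\x_{t+1}-\x_t}^2$, with the quadratic remainder absorbed into the descent term via the condition $\mu_t\le 1/(L+1/\eta_t)$ --- which is exactly what dictates the step size $\mu_t=\frac{1}{L+\frac{\sigma}{R}\sqrt{t}}$. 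The surviving noise term is then paired with the \emph{measurable} quantity $\x_t-\x^*$ and vanishes in expectation. You instead pair the noise with $\x^*-\x_{t+1}$ and decouple through the auxiliary noise-free iterate $\bar\x_{t+1}=\Pc_{\C}(\x_t+\mu_t\nabla F(\x_t))$ together with nonexpansiveness of the Euclidean projection. This is valid, and arguably more transparent, but it buys less: (i) nonexpansiveness in this form is specific to the Euclidean projection, so your argument does not lift to the mirror setting of Theorem~\ref{thm:mirror} the way the paper's Young's-inequality route does; and (ii) your bound $\E\iprod{\nabla F(\x_t)-\Gt}{\bar\x_{t+1}-\x_{t+1}}\le\mu_t\sigma^2$ lacks the factor $\tfrac12$ that the paper's $\tfrac{\eta_t}{2}\sigma^2$ carries, so after summing ($\sum_t\mu_t\sigma^2\le 2R\sigma\sqrt{T}$ versus the paper's $R\sigma\sqrt{T}$) and adding the $R\sigma\sqrt{T}$ from the Abel summation, you obtain $\E[F(\x_\tau)]\ge\frac{\rm OPT}{2}-\frac{R^2L+{\rm OPT}}{2T}-\frac{3R\sigma}{2\sqrt{T}}$ rather than the stated $\frac{R\sigma}{\sqrt{T}}$. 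This is not ``routine bookkeeping'': with your decoupling device the factor $3/2$ appears to be intrinsic, and recovering the exact constant of the theorem requires switching to the paper's absorption trick. Everything else --- the reduction via Lemma~\ref{Vondrak} to $\iprod{\nabla F(\x_t)}{\x^*-\x_t}\ge {\rm OPT}-2F(\x_t)$, the projection/three-point inequality, the telescoping of both the Bregman (here squared-distance) terms and the function values with $F(\x_{T+1})\le{\rm OPT}$ supplying the ${\rm OPT}/2T$ term --- matches the paper's argument.
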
 
\begin{remark}
We would like to note that if we pick $\tau$ to be a random variable taking values in $\{2,\ldots,T-1\}$ with probability $\frac{1}{(T-1)}$ and $1$ and $T$ each with probability $\frac{1}{2(T-1)}$ then
\begin{align*}
\E[F(\x_\tau)] \ge  \frac{\rm{OPT}}{2} -\left(\frac{R^2L}{2T}+\frac{R\sigma}{\sqrt{T}}\right).
\end{align*}
\end{remark}
The above results roughly state that $T=\mathcal{O}\left(\frac{R^2L}{\epsilon}+\frac{R^2\sigma^2}{\epsilon^2}\right)$ iterations of the stochastic gradient method from any initial point, yields a solution whose objective value is at least $\frac{\text{OPT}}{2} - \epsilon$. Stated differently, $T=\mathcal{O}\left(\frac{R^2L}{\epsilon}+\frac{R^2\sigma^2}{\epsilon^2}\right)$ iterations of the stochastic gradient method provides in expectation a value that exceeds $\frac{\text{OPT}}{2} - \epsilon$ approximation ratio for DR-submodular maximization. As explained in Section \ref{sec:stationary}, it is not possible to go beyond the factor $1/2$ approximation ratio using gradient ascent from an arbitrary initialization. 





An important aspect of the above result is that it only requires an unbiased estimate of the gradient. This flexibility is crucial for many DR-submodular maximization problems (see, \eqref{eq:mainproblem}) as in many cases calculating the function $F$ and its derivative is not feasible. However, it is possible to provide a good un-biased estimator for these quantities. 

We would like to point out that our results are similar in nature to known results about stochastic methods for convex optimization. Indeed, this result interpolates between the $\frac{1}{\sqrt{T}}$ for stochastic smooth optimization, and the $1/T$ for deterministic smooth optimization. The special case of $\sigma = 0$ which corresponds to Gradient Ascent deserves particular attention. In this case, and under the assumptions of Theorem~\ref{thm:sgd}, it is possible to show that $F(\x_T)\ge \frac{\rm{OPT}}{2}- \frac{R^2L}{T}$, without the need for a randomized choice of $\tau \in [T]$.

Finally, we would like to note that while the first term in \eqref{optguarantee} decreases as $1/T$, the pre-factor $L$ could be rather large in many applications. For instance, this quantity may depend on the dimension of the input $n$ (see Section~\ref{example-smooth} in the Appendix). Thus, the number of iterations for reaching a desirable accuracy may be very large. Such a large computational load causes (stochastic) gradient methods infeasible in some application domains. We will overcome this deficiency in the next section by using stochastic mirror methods.

\subsection{Stochastic mirror method}  
In the previous section we saw that when the function $F$ and the constraint set $\mathcal{K}$ are well-behaved in the Euclidean norm (e.g.,~$L$ is a constant in the $\ell_2$ norm) then the total number of iterations to reach a certain accuracy is dimension-free and independent of the ambient dimension $n$. However, in many cases of interest, including some that arise from the multilinear extension of discrete submodular functions, the smoothness parameter scales with the ambient dimension and thus the number of iterations will be dimension dependent. This is particularly problematic for large-scale applications where the ambient dimension is very large. However, smoothness when measured in a different norm may still be dimension independent. Indeed, multilinear relaxation of discrete submodular functions have smoothness parameter in $\ell_1$ norm that is bounded by their maximum singleton value (see Section~\ref{example-smooth} in the appendices). In this section we discuss our results for Mirror methods which are designed to adapt to smoothness in general norms. To explain the mirror descent method we need a few definitions. First, recall the definition of smoothness to arbitrary norms:
 a continuously differentiable function $F$ is $L$-smooth with respect to a norm $\|\cdot\|$ if the gradient $\nabla F$ obeys
$\|\nabla F(\vct{x})-\nabla F(\vct{y})\|_{*}\le L \|\vct{x}-\vct{y}\|.$
Here, $\|\cdot\|_{*}$ is the dual norm of $\|\cdot\|$ defined as $\|\vct{g}\|_{*}=\sup_{\vct{x}\in\R^n:\text{ }\|\vct{x}\|\le 1} \vct{g}^T\vct{x}$.
We also need the definition of the mirror map and Bregman divergence (our exposition is adapted from \cite{bubeck2015convex}).
\begin{definition}[mirror map] Let $\mathcal{D}\subset\R^n$ be a convex open set. We say that $\Phi:\mathcal{D}\rightarrow \R$ is a mirror map if it satisfies the following properties: \begin{itemize}
\item[(a)]  $\Phi$ is strictly convex and differentiable.
\item[(b)] The gradient of $\Phi$ takes all possible values, that is $\nabla \Phi(\mathcal{D})=\R^n$.
\item[(c)] The gradient of $\Phi$ diverges on the boundary of $\mathcal{D}$, that is $\underset{\vct{x}\rightarrow \partial \mathcal{D}}{\lim}\text{ }\|\nabla \Phi(\vct{x})\|=+\infty.$ We study mirror maps with $\mathcal{D}=\R_{+}^n$ equal to the positive orthant.
\end{itemize}
%
\end{definition}
\begin{definition}[Bregman Divergence and Projection] We define the Bregman divergence associated to a mirror map $\phi$ as $$D_{\phi}(\vct{x},\vct{y})=\phi(\vct{x})-\phi(\vct{y})-\langle \nabla \phi(\vct{y}),\vct{x}-\vct{y}\rangle.$$
We also define the projection onto a set $\mathcal{K}$ with respect to a mapping $\Phi$ via $$\Pi_{\mathcal{K}}^{\Phi}(\vct{y})=\underset{\vct{x}\in\mathcal{K}}{\arg\min}\text{ }\mathcal{D}_{\Phi}(\vct{x},\vct{y}).$$
\end{definition}
Finally, we define the diameter as follows $$R^2=\underset{\vct{x}\in\mathcal{K}, \vct{y}\in\mathcal{K}\cap\R_{++}^n}{\sup} \Phi(\vct{x})-\Phi(\vct{y}).$$
We are now ready to describe the stochastic mirror method based on a mirror map $\Phi$ and let $\vct{x}_1\in\arg\min_{\vct{x}\in\mathcal{K}} \Phi(\vct{x})$. Also, let $\vct{g}_t$ be an unbiased estimator of the gradient $\nabla f(\vct{x}_t)$. Then for $t\ge1$, let $\vct{y}_{t+1}\in\R_{+}^n$ be such that $\nabla \Phi(\vct{y}_{t+1})=\nabla \Phi(\vct{x}_{t})-\mu_t\vct{g}_t$. Using $\vct{y}_{t+1}$ we obtain the next estimate $\vct{x}_{t+1}$ by projecting $\vct{y}_{t+1}$ onto $\mathcal{K}$ using the mirror map. We detail the Stochastic Mirror Ascent in Algorithm \ref{alg:smm}.
  \begin{algorithm}[t]
 \caption{(Stochastic) Mirror Ascent for Maximizing $F(x)$ over a convex set $\C$}
\label{alg:smm}
\begin{algorithmic}
\STATE {\bfseries Parameters:} Integer $T >0$ and scalars $\mu_t >0$, $t \in [T]$
\STATE{\bfseries Initialize:} $x_1  \in \C$
\FOR{$t=1$ \textbf{to} $T$}
\STATE $\nabla \phi(\y_{t+1}) \leftarrow \nabla \phi(\x_t) + \mu_t \vct{g}_t$ with $\vct{g}_t$ obeying $\mathbb{E}[ \vct{g}_t | \x_t] = \nabla F(\x_t)$
\STATE $\x_{t+1} = {\arg\min}_{\vct{x}\in\mathcal{K}}\text{ }\mathcal{D}_{\Phi}(\vct{x},\vct{y}_{t+1})$ where $\mathcal{D}_{\Phi}$ is the Bregman divergence associated with the mirror map
 \ENDFOR 
\STATE{Pick $\tau$ uniformly at random from $\{1,2,\ldots,T\}$.}
\STATE{\bfseries Output} $x_\tau$ 
\end{algorithmic}
  \end{algorithm}

\begin{theorem}[Stochastic Mirror Method]\label{thm:mirror} Let $\Phi$ be a mirror map that is $1$-strongly convex on $\mathcal{K}$ with respect to the norm $\|\|$. 
Assume that $F$ is $L$-smooth with respect to the norm $\|\|$ and is a monotone, continuous submodular function. Furthermore, assume that we have access to a stochastic oracle $\Gt$ obeying $$\E[\Gt]=\nabla F(\x_t)\quad\text{and}\quad \E\big[\|\Gt-\nabla F(\x_t)\|_{*}^2\big]\le \sigma^2.$$ We start from  $\vct{x}_1\in\arg\min_{\vct{x}\in\mathcal{K}} \Phi(\vct{x})$ and run the mirror ascent updates of the form
\begin{eqnarray*}
\nabla \Phi(\y_{t+1})&=&\nabla \Phi(\x_{t})+\mu_t \Gt,\\
\quad\x_{t+1}&=&\Pi_{\mathcal{K}}^{\Phi}\left(\y_{t+1}\right),
\end{eqnarray*}
with $\mu_t=\frac{1}{L+\frac{\sigma}{R}\sqrt{t}}$. Let $\tau$ be a random variable taking values in $\{1,2,\ldots,T\}$ with equal probability. Then,
\begin{align}
\label{optguarantee}
\E[F(\x_\tau)] \ge  \frac{\rm{OPT}}{2}- \left(\frac{R^2L+\text{OPT}}{2T}+\frac{R\sigma}{\sqrt{T}}\right).
\end{align}
\end{theorem}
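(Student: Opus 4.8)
The engine of the proof is a structural inequality for monotone DR-submodular functions that also underlies Theorem~\ref{local_opt}: for every $\x\in\mathcal{K}$,
\begin{equation*}
\langle \nabla F(\x),\x^*-\x\rangle \;\ge\; \text{OPT}-2F(\x),
\end{equation*}
where $\x^*\in\arg\max_{\x\in\mathcal{K}}F(\x)$. I would prove this by writing $\x^*-\x=(\x^*\vee\x-\x)-(\x-\x^*\wedge\x)$ as a difference of two nonnegative vectors and exploiting that, since all (including diagonal) cross-derivatives of $F$ are nonpositive, $F$ is concave along every nonnegative direction. Concavity gives $\langle\nabla F(\x),\x^*\vee\x-\x\rangle\ge F(\x^*\vee\x)-F(\x)$ and $\langle\nabla F(\x),\x-\x^*\wedge\x\rangle\le F(\x)-F(\x^*\wedge\x)$; subtracting and then invoking monotonicity ($F(\x^*\vee\x)\ge F(\x^*)=\text{OPT}$) and nonnegativity ($F(\x^*\wedge\x)\ge0$) yields the claim. (The weakly DR case replaces the constants by their $\gamma$-analogues, producing the $\gamma^2/(1+\gamma^2)$ factor.) Because $\E[\vct{g}_t\mid\x_t]=\nabla F(\x_t)$ and $\x^*$ is fixed, this reduces the task to lower bounding $\sum_t\mu_t\,\E\langle\vct{g}_t,\x^*-\x_t\rangle$.

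Next I would run the standard mirror-descent one-step analysis. Using the three-point identity for the Bregman divergence together with the first-order optimality of the Bregman projection (the generalized Pythagorean inequality $D_\Phi(\x^*,\y_{t+1})\ge D_\Phi(\x^*,\x_{t+1})+D_\Phi(\x_{t+1},\y_{t+1})$) and the update $\nabla\Phi(\y_{t+1})=\nabla\Phi(\x_t)+\mu_t\vct{g}_t$, one obtains
\begin{equation*}
\mu_t\langle\vct{g}_t,\x^*-\x_t\rangle\le D_\Phi(\x^*,\x_t)-D_\Phi(\x^*,\x_{t+1})-D_\Phi(\x_{t+1},\x_t)+\mu_t\langle\vct{g}_t,\x_{t+1}-\x_t\rangle.
\end{equation*}
The crucial step — and the reason the final error depends on the variance $\sigma^2$ rather than on the full gradient magnitude — is to control the last two terms using $L$-smoothness. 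Splitting $\vct{g}_t=\nabla F(\x_t)+\vct{n}_t$, smoothness turns $\mu_t\langle\nabla F(\x_t),\x_{t+1}-\x_t\rangle$ into the telescoping gain $\mu_t(F(\x_{t+1})-F(\x_t))+\tfrac{\mu_t L}{2}\|\x_{t+1}-\x_t\|^2$; the $1$-strong convexity of $\Phi$ contributes $-D_\Phi(\x_{t+1},\x_t)\le-\tfrac12\|\x_{t+1}-\x_t\|^2$; and since $\mu_t L\le1$ the residual quadratic absorbs the noise cross-term $\mu_t\|\vct{n}_t\|_*\|\x_{t+1}-\x_t\|$ by Young's inequality, leaving $\tfrac{\mu_t^2\|\vct{n}_t\|_*^2}{2(1-\mu_t L)}$.

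Taking expectations (the cross term $\langle\vct{n}_t,\x^*-\x_t\rangle$ vanishes by unbiasedness and $\E\|\vct{n}_t\|_*^2\le\sigma^2$), inserting the structural inequality, and dividing by $\mu_t$ gives, for each $t$,
\begin{equation*}
\text{OPT}-\E F(\x_t)-\E F(\x_{t+1})\le\tfrac1{\mu_t}\E[D_\Phi(\x^*,\x_t)-D_\Phi(\x^*,\x_{t+1})]+\tfrac{\mu_t\sigma^2}{2(1-\mu_t L)}.
\end{equation*}
I would then sum over $t$. With $\mu_t=\tfrac1{L+\frac{\sigma}{R}\sqrt t}$ one checks $1-\mu_t L=\mu_t\tfrac{\sigma}{R}\sqrt t$, so the noise terms equal $\tfrac{R\sigma}{2\sqrt t}$ and sum to at most $R\sigma\sqrt T$; the Bregman terms telescope by Abel summation (since $1/\mu_t$ is increasing, each $D_\Phi(\x^*,\x_t)\le R^2$, and the initialization $\x_1\in\arg\min_\mathcal{K}\Phi$ gives $D_\Phi(\x^*,\x_1)\le R^2$) to at most $R^2/\mu_T=R^2L+R\sigma\sqrt T$; and the left side telescopes to $T\,\text{OPT}-2\sum_t\E F(\x_t)+\E F(\x_1)-\E F(\x_{T+1})$. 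Bounding the boundary terms by $\E F(\x_1)\ge0$ and $\E F(\x_{T+1})\le\text{OPT}$, dividing by $2T$, and using $\E F(\x_\tau)=\tfrac1T\sum_t\E F(\x_t)$ produces exactly the stated bound; the $\text{OPT}/(2T)$ summand is precisely the price of the boundary term $-\E F(\x_{T+1})$, which is why re-weighting $\tau$ as in the Remark removes it.

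The main obstacle is the second paragraph: in a general (non-Euclidean) norm the squared dual norm does not decompose into a bias plus a variance, so $\E\|\vct{g}_t\|_*^2$ cannot simply be split, and one is forced to route the deterministic part of the gradient through smoothness and the telescoping $F$-gain while handling only the genuine noise $\vct{n}_t$ against the strong-convexity quadratic. Arranging the step size so that the Bregman sum collapses to $R^2/\mu_T$ while the noise contribution stays at the $R\sigma/\sqrt T$ scale, and correctly tracking the $F$ boundary terms, is the delicate part.
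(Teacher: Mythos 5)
Your proposal is correct and follows essentially the same route as the paper: the structural inequality $\langle\nabla F(\x),\x^*-\x\rangle\ge \mathrm{OPT}-2F(\x)$ obtained from the lattice decomposition $\x^*-\x=(\x^*\vee\x-\x)-(\x-\x^*\wedge\x)$ together with monotonicity and nonnegativity is exactly the paper's Lemma~\ref{Vondrak} (and its proof of \eqref{mahdicond}), and your one-step bound---three-point identity plus Bregman--Pythagorean inequality, with the smoothness/strong-convexity/Young combination isolating the variance $\sigma^2$---is the paper's Lemma~\ref{key} derived in a slightly different order. The step-size choice, the Abel summation of the Bregman terms to $R^2/\mu_T$, and the treatment of the boundary terms $F(\x_1)$ and $F(\x_{T+1})$ (whence the $\mathrm{OPT}/(2T)$ summand) all coincide with the paper's argument.
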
 
\begin{remark}
We would like to note that if we pick $\tau$ to be a random variable taking values in $\{2,\ldots,T-1\}$ with probability $\frac{1}{(T-1)}$ and $1$ and $T$ each with probability $\frac{1}{2(T-1)}$ then
\begin{align*}
\E[F(\x_\tau)] \ge  \frac{\rm{OPT}}{2} -\left(\frac{R^2L}{2T}+\frac{R\sigma}{\sqrt{T}}\right).
\end{align*}
\end{remark}
As a simple application of Theorem~\ref{thm:mirror}, let us consider submodular optimization problems that arise from maximizing submodular set functions under  $k$-cardinality constraints. For such problems, it will be convenient to use the mirror ascent method with $\ell_1$ norm on the scaled simplex $\{\vct{z}\in [0,1]^n: \sum_{i=1}^n\vct{z}_i=k\}$ with the entropy mirror map $\Phi(\vct{x})=k\sum_{i=1}^nx(i)\log x(i)$. This is due to the fact that the smoothness parameter of the multilinear extension in the $\ell_1$ norm might be much smaller than its counterpart in the $\ell_1$ norm.   In this case, the updates in Algorithm~\ref{alg:smm} take the form
\begin{eqnarray*}
[\vct{y}_{t+1}]_i&=&[\vct{x}_{t}]_ie^{-\eta[\frac{\mu_t}{k}\vct{g}_t]_i},\text{ for }i=1,2,\ldots,n,\quad\\ \vct{x}_{t+1}&=&\underset{\vct{x}\in\mathcal{K}}{\arg\min}\text{ KL}(\vct{x},\vct{y}_{t+1}).
\end{eqnarray*}
Here, KL$(\vct{x},\vct{y})$ denotes the KL divergence between the two vectors $\vct{x}$ and $\vct{y}$. We also note that the corresponding projection can be be done very efficiently in $\mathcal{O}(n)$ time using standard methods described in \cite{brucker1984n, pardalos1990algorithm}. The reason for using  the $\ell_1$ norm with the entropy map is that the smoothness parameter $L$ can be bounded by a constant. Indeed,  the smoothness parameter of the multilinear extension of a monotone submodular function $f$  can be bounded by the maximum marginal value of $f$, i.e. $L\le m _f\triangleq \max_{e} f(e)$. 
%
Furthermore, $R^2$ can be bounded by $O(k\log n)$. Thus, using this particular mirror method, the above result roughly states that $T=\mathcal{O}\left(\frac{m_f k\log(n)}{\epsilon}+\frac{k\sigma^2\log(n)}{\epsilon^2}\right)$
%
iterations of the stochastic mirror method, yields a solution whose objective value is at least $\frac{\text{OPT}}{2} - \epsilon$. Stated differently, $T=\mathcal{O}\left(\frac{m_fk\log(n)}{\epsilon}+\frac{k\sigma^2\log(n)}{\epsilon^2}\right)$ iterations of the stochastic mirror method provides in expectation an objective value exceeding $\frac{\text{OPT}}{2} - \epsilon$ approximation ratio for DR-submodular maximization. Thus, the required number of iterations to reach a certain accuracy now depends only logarithmically on  $n$.

\section{Experiments} \label{sec:experiments}
\begin{figure*}[t!]
  \centering
  \begin{subfigure}{0.45\textwidth}
    \begin{center}
      \centerline{\includegraphics[width=\columnwidth]{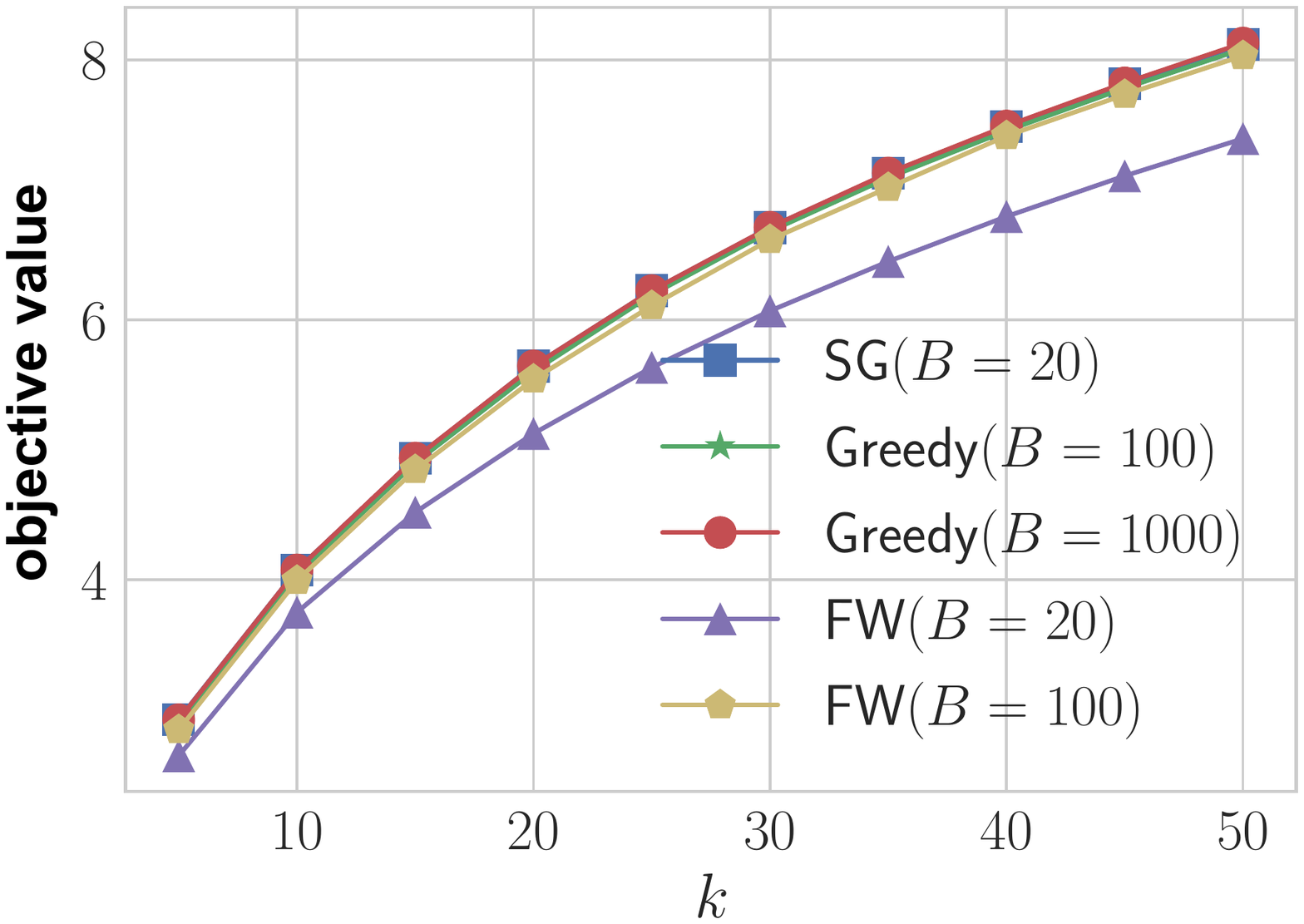}}
      \caption{Concave Over Modular}
      \label{fig1}
    \end{center}
  \end{subfigure}%
  \begin{subfigure}{0.45\textwidth}
    \begin{center}
      \centerline{\includegraphics[width=\columnwidth]{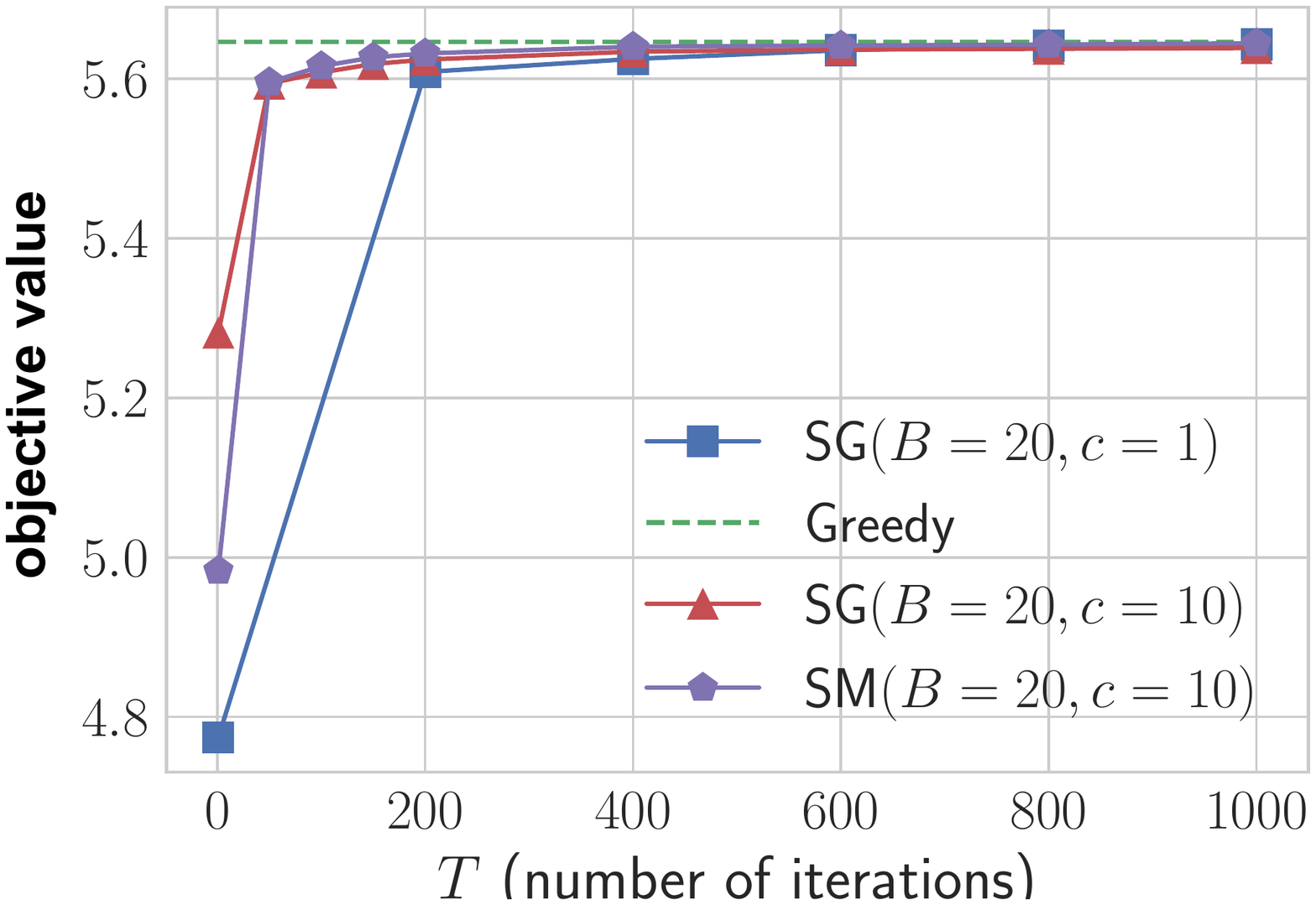}}
              \caption{Concave Over Modular}
      \label{fig2}
    \end{center}
    
  \end{subfigure}\\
    \begin{subfigure}{0.45\textwidth}
    \begin{center}
      \centerline{\includegraphics[width=\columnwidth]{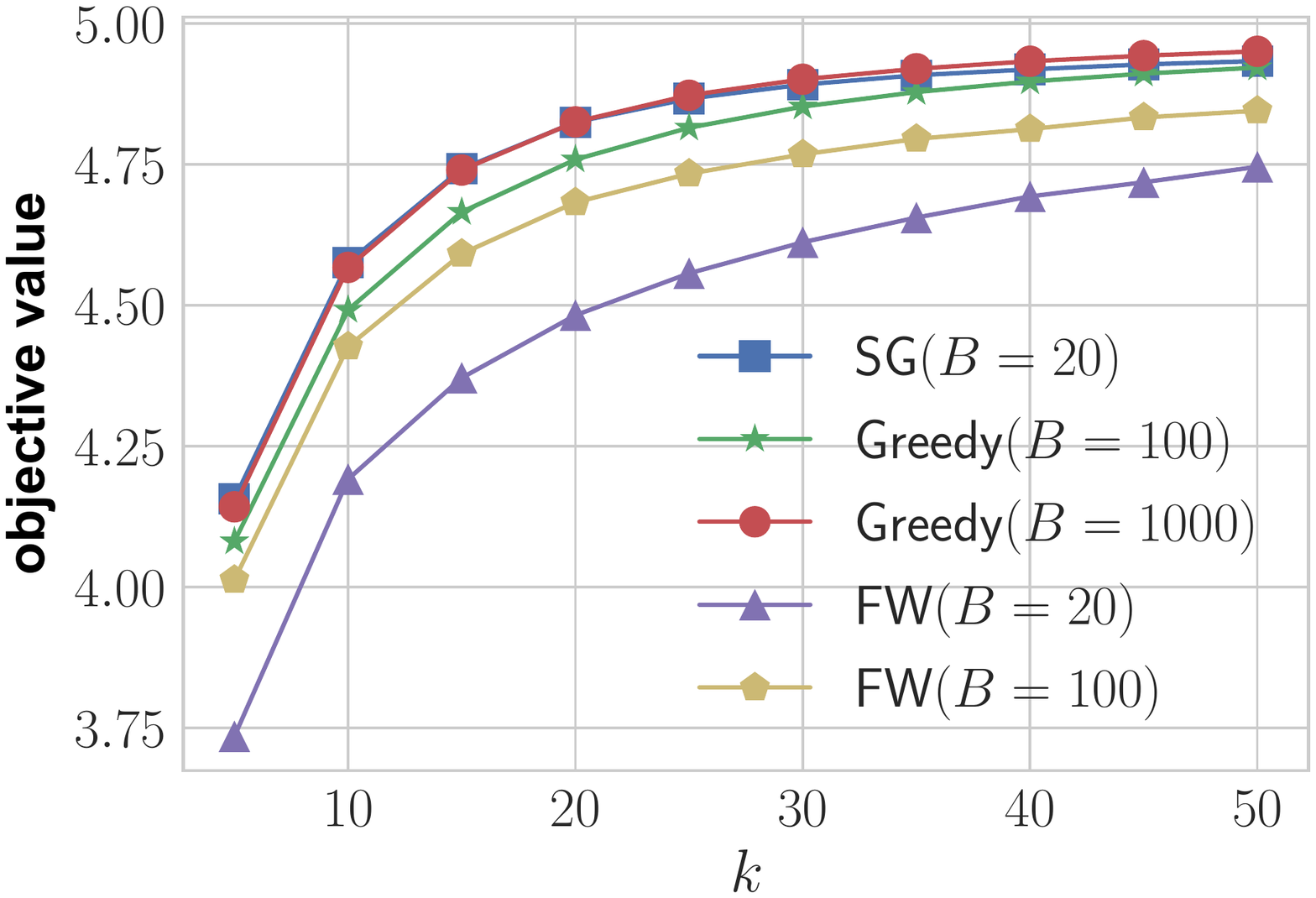}}
            \caption{Facility Location}
      \label{fig3}
    \end{center}
      \end{subfigure}
        \begin{subfigure}{0.45\textwidth}
    \begin{center}
      \centerline{\includegraphics[width=\columnwidth]{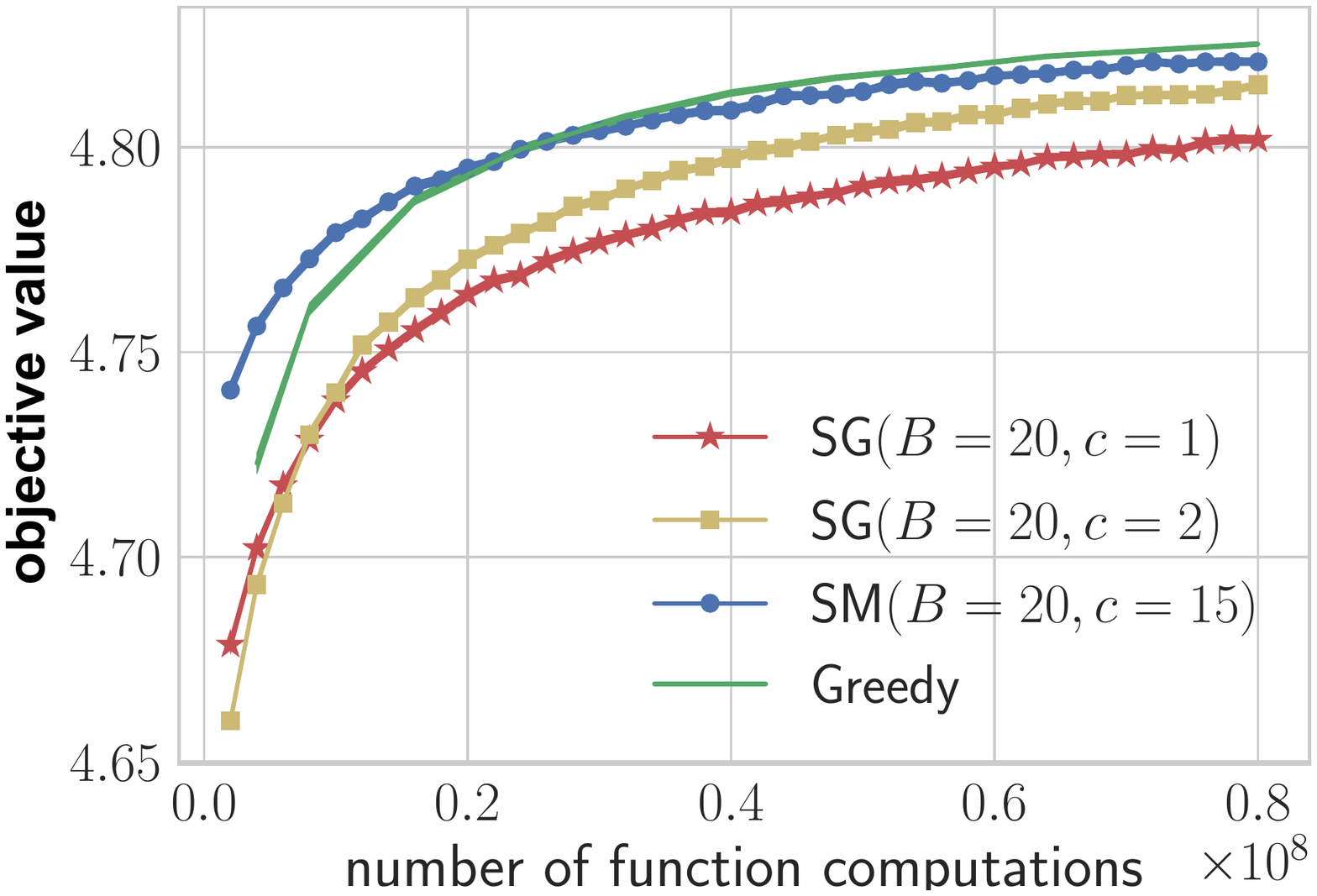}}
            \caption{Facility Location}
      \label{fig4}
    \end{center}
  \end{subfigure}~%
    \caption{(a) shows the performance of the algorithms  w.r.t. the cardinality constraint $k$ for the concave over modular  objective.  Each of the continuous algorithms (i.e., SG, SM and FW)  run for $T = 2000$ iterations. (b) shows the performance of the algorithms SG and SM versus the number of iterations for fixed $k=20$ for the concave over modular objective. The green dashed line indicates the value obtained by Greedy (with $B = 1000$). Recall that the step size of SM and SG is $c/\sqrt{t}$. (c) shows the performance of the algorithms  w.r.t. the cardinality constraint $k$ for the facility location  objective function.  Each of the continuous algorithms (SG, SM, FW)  run for $T = 2000$ iterations. (d) shows the performance of different algorithms versus  the number of simple function computations (i.e. the number of $f_i$'s evaluated during the algorithm) for the facility location objective function. For the greedy algorithm, larger number of function computations corresponds to a larger batch size. For SG and SM, larger time corresponds to larger iterations.   \label{fig-kolli}}    
\end{figure*}
In our experiments, we consider a movie recommendation application \cite{serban17} where each user $i$ has a user-specific utility function $f_i$ for evaluating sets of movies. The goal is to find a set of $k$ movies such that in expectation over users' preferences it provides the highest utility, i.e., $\max_{|S|\leq k}f(S)$, where $f(S)  \doteq\E_{i\sim\dist}[f_i(S)]$.  This is an instance of the stochastic submodular maximization problem defined in \eqref{eq:stochsub}.  One solution to this problem is to sample $B$ users and consider the empirical objective function $\frac{1}{B}\sum_{j=1}^B f_i$. When $B$ is large enough, this provides a good estimate of the true objective function $f$. We can then run the (discrete) greedy algorithm on the  empirical objective function  to find a good set of size $k$. Another way of solving this problem is to evaluate the multilinear extension $F_i$ of any sampled function $f_i$ and solve the problem in the continuous domain as follows. Let $F(\vct{x}) = \mathbb{E}_{i \sim \dist} [F_i(\vct{x})]$ for $x \in [0,1]^n$ and define the constraint set $\mathcal{P}_k =   \{ \vct{x} \in [0,1]^m: \sum_{i=1}^n x_i \leq k\}$. The discrete and continuous optimization formulations lead to the same optimal value \cite{calinescu11maximizing}: $$\max_{S: |S| \leq k} f(S) = \max_{\vct{x} \in \mathcal{P}_k} F(\vct{x}).$$
Therefore, by running the stochastic versions of projected gradient methods, we can find a solution in the continuous domain that is at least $1/2$ approximation to the optimal value. By rounding that fractional solution (for instance via randomized Pipage rounding \cite{calinescu11maximizing}) we obtain a set whose utility is at least $1/2$ of the optimum solution set of size $k$. We note that randomized Pipage rounding does not need access to the value of $f$. We also remark that projection onto  $\mathcal{P}_k$ can be done very efficiently in $O(n)$ time (see \cite{karimi17, brucker1984n, pardalos1990algorithm}). Therefore, such approach easily scales to big data scenarios where the size of the data set (e.g.~number of users) or the number of items $n$ (e.g.~number of movies) are very large.

\noindent In our experiments, we consider the following baselines: 
\begin{itemize}
 \item[(i)] Stochastic Gradient Ascent (SG): with the step size $\mu_t = c/\sqrt{t}$ and batch size $B$. The details for computing an unbiased estimation for the gradient of $F$ are given in Appendix \ref{unbiased}.  
  \item[(ii)] Stochastic Mirror Ascent (SM): with the step size $\mu_t = c/\sqrt{t}$ and batch size $B$. 
 \item[(iii)] Frank-Wolfe (FW) variant of \cite{bian2016guaranteed}: with parameter $T$ for the total number of iterations and batch size $B$  (we further let $\alpha =1, \delta=0$, see Algorithm 1 in \cite{bian2016guaranteed} for more details). 		
 \item [(iv)] Batch-mode Greedy (Greedy): by running greedy algorithm over the empirical objective function with  $B$ samples. 
\end{itemize}

To run the experiments we use the MovieLens data set. It consists of 1 million ratings (from 1 to 5) by $n=6041$ users for $m=4000$ movies.  Let $r_{i,j}$ denote the rating of user $i$ for movie $j$ (if such a rating does not exist we assign $r_{i,j}$ to 0). In our experiments, we consider two well motivated objective functions. The first one is the facility location where the valuation function by user $i$ is defined as  $f_i(S) = \max_{j\in S} r_{i,j}$. In words, the way user $i$ evaluates a set $S$ is by picking the highest rated movie in $S$. For simplicity, we also assume that the distribution $\dist$ is uniform. Thus, the objective function is $f_1(S) = \frac{1}{n}\sum_{i=1}^n  \max_{j\in S} r_{i,j}$.  

In our second experiment, we consider a  different user-specific valuation function which is a concave function composed with a modular function, i.e.,  $f_i(S) =(\sum_{j\in S} r_{i,j})^{1/2}.$ Again, by considering the uniform distribution over the set of users, we obtain $f_2(S) = \frac{1}{n} \sum_{i=1}^n  (\sum_{j\in S} r_{i,j})^{1/2}.$ Note that the multilinear extensions of $f_1$ and $f_2$ are neither concave nor convex. 

Figure~\ref{fig-kolli} depicts the performance of different algorithms for the two proposed objective functions.  As Figures~\ref{fig1} and  \ref{fig3} show, the FW algorithm needs a much higher batch size to be comparable in performance  w.r.t. to our stochastic gradient methods. With the same batch size and number of iterations SG, SM, and FW have similar computational complexity. Therefore, a smaller batch size leads to less computational effort. Figure~\ref{fig2} shows that after a few hundred iterations both SG and SM with $B=20$ obtain almost the same utility as Greedy with a large batch size ($B=1000$). Finally, Figure~\ref{fig4} shows the performance of the algorithms with respect to the number of times the single functions ($f_i$'s) are evaluated.  This further shows that gradient based methods have comparable complexity w.r.t. the Greedy algorithm in the discrete domain.  

%

\section{Conclusion}
In this paper we studied gradient methods for submodular maximization. Despite the lack of convexity of the objective function we demonstrated that local search heuristics are effective at finding approximately optimal solutions. In particular, we showed that all fixed point of projected gradient ascent provide a factor $1/2$ approximation to the global maxima. We also demonstrated that stochastic gradient and mirror methods achieve an objective value of ${\rm OPT}/2-\epsilon$ in $\mathcal{O}(\frac{1}{\epsilon^2})$ iterations. We further demonstrated the effectiveness of our methods with experiments on real data. 

While in this paper we have focused on convex constraints, our framework may allow non-convex constraints as well. For instance it may be possible to combine our framework with recent results in \cite{oymak2015sharp, soltanolkotabi2017structured} to deal with general nonconvex constraints. Furthermore, in some cases projection onto the constraint set may be computationally intensive or even intractable but calculating an approximate projection may be possible with significantly less effort. One of the advantages of gradient descent-based proofs is that they continue to work even when some perturbations are introduced in the updates. Therefore, we believe that our framework can deal with approximate projections and we hope to pursue this in future work.
%

\section{Proofs}
Throughout this section we use the assumption that $\mathcal{K}\subseteq \mathcal{X}\subset\R_{+}^n$. We first prove Theorems~\ref{local_opt} and \ref{thm:mirror}. We then show in Section \ref{proofofavg} how the proof of Theorem~\ref{thm:sgd} follows from the proof of Theorem~\ref{thm:mirror}.
\subsection{Proofs for quality of stationary points (Proof of Theorem~\ref{local_opt})}
Let us begin with part (i) of the theorem. Let $F: \mathcal{X} \to \mathbb{R}_+$ be a weakly DR-submodular and monotone function. By \eqref{eq:weaksub} for any two vectors $\vct{x},\vct{y}\in \mathcal{X}$ we have
\begin{align}
\label{cond}
\nabla F(\vct{x})\ge \gamma \nabla F(\vct{y})\quad\text{for all}\quad\vct{x}\preceq\vct{y}.
\end{align}
To prove part (i) we first prove that for any two vectors $\vct{x},\vct{y}\in \mathcal{X},$ we have
\begin{align}
\label{mahdicond}
F(\vct{y})-\left(1+\frac{1}{\gamma^2}\right)F(\vct{x})\le \frac{1}{\gamma}\langle\nabla F(\vct{x}),\vct{y}-\vct{x}\rangle.
\end{align}
To this aim note that for all $\vct{x}, \vct{z} \in \X$ s.t. $\vct{x}\preceq\vct{z}$, by using \eqref{cond}, we have
\begin{align}
\label{ineq1}
F(\vct{z})-F(\vct{x})=&\int_0^1 \langle \vct{z}-\vct{x},\nabla F\left(\vct{x}+t(\vct{z}-\vct{x})\right)\rangle dt,\nonumber\\
\le& \frac{1}{\gamma}\int_0^1 \langle \vct{z}-\vct{x},\nabla F(\vct{x})\rangle dt,\nonumber\\
=& \frac{1}{\gamma}\langle \vct{z}-\vct{x},\nabla F(\vct{x})\rangle.
\end{align}
Similarly, note that using \eqref{cond}
\begin{align}
\label{ineq2}
F(\vct{z})-F(\vct{x})=&\int_0^1 \langle \vct{z}-\vct{x},\nabla F\left(\vct{x}+t(\vct{z}-\vct{x})\right)\rangle dt,\nonumber\\
\ge& \gamma\int_0^1 \langle \vct{z}-\vct{x},\nabla F(\vct{z})\rangle dt,\nonumber\\
=& \gamma \langle \vct{z}-\vct{x},\nabla F(\vct{z})\rangle.
\end{align}
Now, from \eqref{ineq1} we deduce that for any $\vct{x} , \vct{y}  \in \X$:
\begin{align*}
F(\vct{x} \vee \vct{y} )-F(\vct{x}) \leq \frac{1}{\gamma}\langle \vct{x} \vee \vct{y}-\vct{x},\nabla F(\vct{x})\rangle,
\end{align*} 
and from \eqref{ineq2}:
\begin{align*}
F(\vct{x}) - F(\vct{x} \wedge \vct{y} ) \geq \gamma \langle \vct{x} -  \vct{x} \wedge \vct{y},\nabla F(\vct{x})\rangle,
\end{align*} 
From these two inequalities we immediately obtain
\begin{align}
F(\vct{x} \vee \vct{y}) - (1 + \frac{1}{\gamma^2} ) F(\vct{x} ) + \frac{1}{\gamma^2} F(\vct{x} \wedge \vct{y})  \leq \frac{1}{\gamma} \langle   \vct{x} \wedge \vct{y} + \vct{x} \vee \vct{y} - 2 \vct{x},\nabla F(\vct{x}) \rangle,
\end{align} 
and we obtain \eqref{mahdicond} by noting that $F(\vct{x} \wedge \vct{y}) \geq 0$ and $ \vct{x} \wedge \vct{y} + \vct{x} \vee \vct{y}  = \vct{x} + \vct{y}$.

Part (i) of the theorem follows from \eqref{mahdicond} by letting $\vct{x}$ to be a stationary point and $\vct{y}=\vct{x}^*:=\underset{\mathcal{K}}{\arg\max} F(\vct{y})$.

To prove part (ii) note that by the smoothness of the function (more specifically the quadratic upper bound) we have
\begin{align*}
F(\vct{x}_{t+1})\ge F(\vct{x}_t)+\langle\nabla F(\vct{x}_t),\vct{x}_{t+1}-\vct{x}_t\rangle-\frac{L}{2}\twonorm{\vct{x}_{t+1}-\vct{x}_t}^2.
\end{align*}
Now note that $\vct{x}_{t+1}=\mathcal{P}_{\mathcal{K}}\left(\vct{x}_t+\mu_t\nabla F(\vct{x}_t)\right)$ and thus using the properties of convex projections we have
\begin{align*}
\langle\vct{x}_{t+1}-\vct{x}_t,\vct{x}_{t+1}-\left(\vct{x}_t+\mu_t\nabla F(\vct{x}_t)\right)\rangle\le0\quad\Rightarrow\quad\twonorm{\vct{x}_{t+1}-\vct{x}_t}^2\le \mu_t\langle \vct{x}_{t+1}-\vct{x}_t,\nabla F(\vct{x}_t)\rangle.
\end{align*}
Plugging this into the latter inequality we conclude that for $\mu_t\le \frac{1}{L}$
\begin{align*}
F(\vct{x}_{t+1})\ge F(\vct{x}_t)+\left(\frac{1}{\mu_t}-\frac{L}{2}\right)\twonorm{\vct{x}_{t+1}-\vct{x}_t}^2\ge F(\vct{x}_t)+\frac{L}{2}\twonorm{\vct{x}_{t+1}-\vct{x}_t}^2.
\end{align*}
Summing both sides we conclude that
\begin{align*}
\sum_{t=1}^{\infty} \twonorm{\vct{x}_{t+1}-\vct{x}_t}^2,
\end{align*}
is bounded. This in turn implies that $\twonorm{\vct{x}_{t+1}-\vct{x}_t}$ goes to zero. Which implies that $\vct{x}_t$ converges to a point $\vct{x}$. This means that this point obeys
\begin{align*}
\vct{x}=\mathcal{P}_{\mathcal{K}}\left(\vct{x}+\mu_t\nabla F(\vct{x})\right).
\end{align*}
By definition of projection the latter implies that $\mathcal{P}_{\mathcal{K}-\{\vct{x}\}}(\mu_t\nabla F(\vct{x}))=0$. A well known result in convex analysis (e.g.,~see \cite[Lemma 6.4]{oymak2015sharp} or \cite[Lemma 7.11]{soltanolkotabi2017structured}) implies $\max_{\vct{y} \in \mathcal{K}} \langle  \nabla F(x), \vct{y} - \vct{x} \rangle \leq 0$, concluding the proof.
\subsection{Proof of (stochastic) mirror method (Proof of Theorem~\ref{thm:mirror})}
\label{proofofavg}
We begin by stating some lemmas about mirror descent together with some useful preliminary lemmas in the next section.
\subsubsection{Preliminary lemmas}\label{backgnd}
We begin with two lemmas about mirror descent adapted from \cite{bubeck2015convex}.
\begin{lemma}\label{cvxProj} Let $\vct{x}\in\mathcal{K}$ and $\vct{y}\in\mathcal{K}$, then
\begin{align*}
\big\langle \nabla \Phi\left(\Pi_{\mathcal{K}}^{\Phi}(\y)\right)-\nabla\Phi(\y),\Pi_{\mathcal{K}}^{\Phi}(\y)-\x\big\rangle\le 0.
\end{align*}
\end{lemma}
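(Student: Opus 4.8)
The plan is to recognize this as the standard first-order optimality condition for the Bregman projection, so that the whole argument reduces to differentiating the Bregman divergence in its first slot and invoking the variational inequality that characterizes the minimizer of a differentiable convex function over a convex set.

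Write $\vct{z} = \Pi_{\mathcal{K}}^{\Phi}(\vct{y})$, which by definition minimizes $g(\vct{u}) := D_{\Phi}(\vct{u},\vct{y}) = \Phi(\vct{u}) - \Phi(\vct{y}) - \langle \nabla \Phi(\vct{y}), \vct{u}-\vct{y}\rangle$ over $\vct{u}\in\mathcal{K}$. First I would compute the gradient of $g$ in its first argument: the terms $-\Phi(\vct{y})$ and $+\langle \nabla \Phi(\vct{y}),\vct{y}\rangle$ are constant in $\vct{u}$ and the remaining term $-\langle \nabla \Phi(\vct{y}),\vct{u}\rangle$ is linear, so $\nabla_{\vct{u}}\, g(\vct{u}) = \nabla \Phi(\vct{u}) - \nabla \Phi(\vct{y})$. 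Because $\Phi$ is strictly convex and differentiable (property (a) of the mirror map), $g$ is a differentiable convex function, and $\mathcal{K}$ is convex, so the first-order optimality condition at the minimizer $\vct{z}$ reads $\langle \nabla_{\vct{u}}\, g(\vct{z}), \vct{x}-\vct{z}\rangle \ge 0$ for every $\vct{x}\in\mathcal{K}$. Substituting the computed gradient gives $\langle \nabla \Phi(\vct{z}) - \nabla \Phi(\vct{y}), \vct{x}-\vct{z}\rangle \ge 0$, which upon reversing the displacement is exactly the claimed inequality $\langle \nabla \Phi(\vct{z}) - \nabla \Phi(\vct{y}), \vct{z}-\vct{x}\rangle \le 0$.

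The one point that requires care — and which I regard as the main, if minor, obstacle — is justifying that $\nabla \Phi(\vct{z})$ is well-defined, i.e. that the projected point $\vct{z}$ lies in the open domain $\mathcal{D}=\R_{++}^n$ where $\Phi$ is differentiable, rather than on its boundary. This is precisely what property (c) of the mirror map buys us: since $\|\nabla \Phi(\vct{x})\|\to +\infty$ as $\vct{x}\to\partial\mathcal{D}$, the objective $g$ has an exploding gradient near the boundary, which forces its minimizer over $\mathcal{K}$ into the relative interior $\mathcal{K}\cap\R_{++}^n$. Once this is established, the gradient computation and the optimality condition are valid at $\vct{z}$, and strict convexity of $\Phi$ additionally guarantees that the minimizer is unique, so that $\Pi_{\mathcal{K}}^{\Phi}(\vct{y})$ is well-defined. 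I would state this interior-point fact as a one-line consequence of property (c) (citing \cite{bubeck2015convex}) rather than reprove it, since it is a standard feature of the mirror-map setup, and then carry out the two-line optimality argument above.
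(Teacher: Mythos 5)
Your proof is correct. The paper does not actually prove this lemma---it states it as adapted from \cite{bubeck2015convex} and uses it as a black box---and your argument (compute $\nabla_{\vct{u}} D_{\Phi}(\vct{u},\vct{y}) = \nabla\Phi(\vct{u})-\nabla\Phi(\vct{y})$, then apply the first-order optimality condition for the convex minimization defining $\Pi_{\mathcal{K}}^{\Phi}$, with property (c) of the mirror map guaranteeing the minimizer lies in the open domain where $\nabla\Phi$ exists) is precisely the standard proof from that reference, so there is nothing to compare beyond noting that your explicit handling of the boundary issue is a point the paper leaves implicit.
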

Also we need the following well-known identity about Bregman divergences which will be useful several times in our proofs.
\begin{lemma}\label{3sum}
\begin{align*}
\langle \nabla \phi(\x)-\nabla \phi(\y),\x-\z\rangle=\Dphi(\x,\y)+\Dphi(\z,\x)-\Dphi(\z,\y).
\end{align*} 
\end{lemma}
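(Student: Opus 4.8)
The plan is to prove the identity by a direct expansion of the three Bregman divergences on the right-hand side, using only the defining formula $\Dphi(\vct{a},\vct{b})=\phi(\vct{a})-\phi(\vct{b})-\langle \nabla\phi(\vct{b}),\vct{a}-\vct{b}\rangle$. No convexity, strong convexity, or smoothness beyond the existence of $\nabla\phi$ is required: the statement is a purely algebraic identity valid for any differentiable $\phi$, so the entire argument is a single bookkeeping computation.

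First I would write out $\Dphi(\x,\y)$, $\Dphi(\z,\x)$, and $\Dphi(\z,\y)$ separately according to the definition, so that the target quantity $\Dphi(\x,\y)+\Dphi(\z,\x)-\Dphi(\z,\y)$ splits into a sum of zeroth-order (function-value) terms and a sum of inner-product (gradient) terms. The function-value terms are $\phi(\x)-\phi(\y)$, $\phi(\z)-\phi(\x)$, and $-(\phi(\z)-\phi(\y))$, which telescope to zero. This is the key simplification: all zeroth-order contributions cancel, leaving only the gradient terms to account for.

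It then remains to collect the inner-product terms, namely $-\langle\nabla\phi(\y),\x-\y\rangle-\langle\nabla\phi(\x),\z-\x\rangle+\langle\nabla\phi(\y),\z-\y\rangle$. I would combine the two terms carrying $\nabla\phi(\y)$, which gives $\langle\nabla\phi(\y),\z-\x\rangle=-\langle\nabla\phi(\y),\x-\z\rangle$, and rewrite the remaining term as $-\langle\nabla\phi(\x),\z-\x\rangle=\langle\nabla\phi(\x),\x-\z\rangle$. Adding these reproduces exactly $\langle\nabla\phi(\x)-\nabla\phi(\y),\x-\z\rangle$, which is the left-hand side, completing the proof.

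Since the computation is entirely mechanical, there is no genuine obstacle; the only point requiring care is sign bookkeeping when merging the two $\nabla\phi(\y)$ inner products and when flipping $-\langle\nabla\phi(\x),\z-\x\rangle$ into $\langle\nabla\phi(\x),\x-\z\rangle$. A convenient sanity check is to verify the telescoping of the function-value terms first, before touching the gradient terms, since any error in the expansion would show up there immediately.
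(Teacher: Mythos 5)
Your proof is correct: the direct expansion of the three Bregman divergences, cancellation of the telescoping $\phi$-values, and regrouping of the $\nabla\phi(\x)$ and $\nabla\phi(\y)$ inner products yields exactly $\langle \nabla \phi(\x)-\nabla \phi(\y),\x-\z\rangle$. The paper states this identity without proof as a well-known fact, and your purely algebraic verification is the standard (and essentially only) argument, requiring nothing beyond differentiability of $\phi$.
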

We next state a lemma due to Chekuri, Vondrak, and Zenkluser.
\begin{lemma}\cite[Lemma 3.2]{vondrak2011submodular}\label{Vondrak} Assume $F$ is a monotone and submodular function. Then, for any two points $\vct{x},\vct{y}\in\mathcal{K}$
\begin{align*}
\big\langle \x-\y,\nabla F(\x)\big\rangle\le 2F(\x)-F(\max(\x,\y))-F(\min(\x,\y)).
\end{align*}
\end{lemma}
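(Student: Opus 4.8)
The plan is to reduce the claimed inequality to two first-order comparisons between \emph{ordered} pairs of points and then invoke the directional bounds \eqref{ineq1} and \eqref{ineq2} already derived above (specialized to $\gamma=1$, which is the submodular/DR case relevant here). The whole argument is a one-line combination once the right decomposition is in place.

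First I would split the displacement $\x-\y$ according to the coordinates on which $\x$ dominates $\y$ and those on which $\y$ dominates $\x$. Writing $\vct{d}_1 = \x - (\x\wedge\y)\ge 0$, supported on $\{i:x_i>y_i\}$, and $\vct{d}_2 = (\x\vee\y) - \x\ge 0$, supported on $\{i:x_i<y_i\}$, the lattice identity $\x+\y=(\x\vee\y)+(\x\wedge\y)$ gives $\x-\y=\vct{d}_1-\vct{d}_2$. The crucial structural point is that this introduces the comparable chain $\x\wedge\y\preceq\x\preceq\x\vee\y$, so that both new pairs are ordered in the product order, which is exactly the hypothesis that the DR-type inequalities require.

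Next I would apply the two directional inequalities with the common point $\x$ playing complementary roles. Applying \eqref{ineq2} with $\gamma=1$ to the ordered pair $\x\wedge\y\preceq\x$ (gradient at the larger endpoint) yields the upper bound $\langle\vct{d}_1,\nabla F(\x)\rangle\le F(\x)-F(\x\wedge\y)$. Applying \eqref{ineq1} with $\gamma=1$ to the ordered pair $\x\preceq\x\vee\y$ (gradient at the smaller endpoint) yields $F(\x\vee\y)-F(\x)\le\langle\vct{d}_2,\nabla F(\x)\rangle$, i.e. a lower bound on the $\vct{d}_2$-term. Subtracting the second from the first and using $\langle\x-\y,\nabla F(\x)\rangle=\langle\vct{d}_1,\nabla F(\x)\rangle-\langle\vct{d}_2,\nabla F(\x)\rangle$ produces exactly $\langle\x-\y,\nabla F(\x)\rangle\le 2F(\x)-F(\x\vee\y)-F(\x\wedge\y)$, as desired.

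The one real subtlety — and the step I expect to be the crux — is getting the two directions right: I need an \emph{upper} bound on the $\vct{d}_1$-term but a \emph{lower} bound on the $\vct{d}_2$-term, which forces me to use \eqref{ineq2} on one pair and \eqref{ineq1} on the other precisely so that $\nabla F$ is evaluated at $\x$ in both cases. These two bounds are nothing but the statement that $F$ is concave along nonnegative directions, which in turn rests on the antitone-gradient condition \eqref{cond}; I would flag that plain continuous submodularity (non-positive \emph{off-diagonal} Hessian) does not by itself suffice, since it says nothing about the diagonal, so the operative hypothesis is the diminishing-returns structure that validates \eqref{ineq1}--\eqref{ineq2}. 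Notably, monotonicity of $F$ is not actually used in this particular chain of inequalities.
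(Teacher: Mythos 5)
Your proof is correct and is essentially the argument the paper itself uses: although the paper only cites this lemma, its derivation of \eqref{mahdicond} in Section 7.1 combines \eqref{ineq1} applied to the ordered pair $\x\preceq\x\vee\y$ with \eqref{ineq2} applied to $\x\wedge\y\preceq\x$ and the lattice identity $\x\wedge\y+\x\vee\y=\x+\y$, which at $\gamma=1$ (and before discarding $F(\x\wedge\y)\ge 0$) is exactly your decomposition. Your side remark that the antitone-gradient/DR property, rather than plain continuous submodularity or monotonicity, is what actually drives the two directional bounds is also accurate.
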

\begin{lemma}\label{fu} Consider one iteration of the mirror descent update
\begin{align*}
\nabla \Phi(\y)=&\nabla \Phi(\x)+\mu G(\vct{x}),\\
\x^{+}=&\Pi_{\mathcal{K}}^{\Phi}\left(\y\right).
\end{align*}
Then, for all $\vct{z}\in\mathcal{K}$
\begin{align*}
\langle G(\vct{x}), \x^{+}-\z\rangle\ge\frac{1}{\mu}\left( \Dphi(\x^{+},\x)+\Dphi(\z,{\x}^{+})-\Dphi(\z,\x)\right).
\end{align*}
\end{lemma}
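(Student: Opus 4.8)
The plan is to reduce the claimed inequality to the variational characterization of the Bregman projection, using nothing more than the definition of the update together with the two preliminary identities already recorded. First I would eliminate $G(\vct{x})$ in favor of the mirror map: the update $\nabla\Phi(\y)=\nabla\Phi(\x)+\mu G(\vct{x})$ gives $\mu\, G(\vct{x})=\nabla\Phi(\y)-\nabla\Phi(\x)$, so that for any $\z\in\mathcal{K}$,
$$\mu\,\langle G(\vct{x}),\x^{+}-\z\rangle=\langle \nabla\Phi(\y)-\nabla\Phi(\x),\x^{+}-\z\rangle.$$
Thus the entire statement becomes an inequality about the gradients of $\Phi$ alone, with no remaining reference to the objective.

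Next I would rewrite the target right-hand side in inner-product form. Applying the three-point identity of Lemma~\ref{3sum} with the arguments $(\x^{+},\x,\z)$ yields
$$\langle \nabla\Phi(\x^{+})-\nabla\Phi(\x),\x^{+}-\z\rangle=\Dphi(\x^{+},\x)+\Dphi(\z,\x^{+})-\Dphi(\z,\x).$$
Combining this with the previous display, the claim $\langle G(\vct{x}),\x^{+}-\z\rangle\ge\frac{1}{\mu}\big(\Dphi(\x^{+},\x)+\Dphi(\z,\x^{+})-\Dphi(\z,\x)\big)$ is equivalent, after multiplying through by $\mu>0$, to
$$\langle \nabla\Phi(\y)-\nabla\Phi(\x),\x^{+}-\z\rangle\ge\langle \nabla\Phi(\x^{+})-\nabla\Phi(\x),\x^{+}-\z\rangle.$$

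Finally I would observe that the difference of the two sides collapses: subtracting the right side from the left cancels the $\nabla\Phi(\x)$ terms and leaves $\langle \nabla\Phi(\y)-\nabla\Phi(\x^{+}),\x^{+}-\z\rangle=-\langle \nabla\Phi(\x^{+})-\nabla\Phi(\y),\x^{+}-\z\rangle$. Since $\x^{+}=\Pi_{\mathcal{K}}^{\Phi}(\y)$, Lemma~\ref{cvxProj}, applied with $\y$ as the point being projected and with the generic set element taken to be $\z\in\mathcal{K}$, states exactly that $\langle \nabla\Phi(\x^{+})-\nabla\Phi(\y),\x^{+}-\z\rangle\le 0$, so the displayed difference is nonnegative and the desired inequality follows. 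The argument is essentially bookkeeping; the only substantive steps are recognizing that the first-order optimality condition for the Bregman projection (Lemma~\ref{cvxProj}) supplies precisely the sign needed, and that the three-point identity must be invoked with the substitution $(\x^{+},\x,\z)$ so that its right-hand side matches the target combination of divergences term for term. I anticipate no genuine obstacle beyond this correct alignment of the two lemmas.
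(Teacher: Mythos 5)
Your proposal is correct and follows essentially the same route as the paper's proof: both arguments combine the projection inequality of Lemma~\ref{cvxProj} with the three-point identity of Lemma~\ref{3sum} applied to the triple $(\x^{+},\x,\z)$, differing only in that you work backwards from the claim while the paper works forwards from the projection inequality.
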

\begin{proof}
By Lemma \ref{cvxProj}
\begin{align*}
\big\langle \nabla\Phi(\x^{+})-\nabla \Phi(\y),\x^{+}-\z\big\rangle\le 0.
\end{align*}
Using $\nabla \Phi(\y)=\nabla \Phi(\x)+\mu G(\vct{x})$, we conclude that
\begin{align*}
\big\langle G(\vct{x}),\x^{+}-\z\big\rangle\ge & \frac{1}{\mu} \big\langle \Phi(\x^{+})-\nabla \Phi(\x), \x^{+}-\z\big\rangle,\\
=&\frac{1}{\mu}\left( \Dphi(\x^{+},\x)+\Dphi(\z,{\x}^{+})-\Dphi(\z,\x)\right),
\end{align*}
where the last equality follows from Lemma \ref{3sum}.
\end{proof}
\begin{lemma}\label{key} Consider the setting of Theorem~\ref{thm:mirror} and let $D_\Phi$ be the Bregman divergence corresponding to the mirror map $\Phi$. Let $\eta$ be a nonnegative scalar with $\mu_t$ obeying
\begin{align*}
\mu_t\le \frac{1}{L+\frac{1}{\eta}}.
\end{align*}
 Then, 
\begin{align*}
\big\langle \Gt, \x_t-\z\big\rangle\ge \frac{1}{\mu_t} \left(\Dphi(\z,\x_{t+1})-\Dphi(\z,\x_{t})\right)+F(\x_t)-F(\x_{t+1})-\frac{\eta}{2}\|\nabla F(\x_t)-\Gt\|_{*}^2.
\end{align*}
\end{lemma}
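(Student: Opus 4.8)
The plan is to derive the inequality by combining the optimality property of the mirror projection (Lemma~\ref{fu}) with the smoothness of $F$, the strong convexity of $\Phi$, and a single application of Young's inequality to absorb the gradient noise.

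First I would apply Lemma~\ref{fu} with the choice $G(\x_t)=\Gt$, $\x^{+}=\x_{t+1}$ and step size $\mu_t$, which immediately yields
\begin{align*}
\big\langle \Gt,\x_{t+1}-\z\big\rangle \ge \frac{1}{\mu_t}\left(\Dphi(\x_{t+1},\x_t)+\Dphi(\z,\x_{t+1})-\Dphi(\z,\x_t)\right).
\end{align*}
Since the target involves $\langle\Gt,\x_t-\z\rangle$ rather than $\langle\Gt,\x_{t+1}-\z\rangle$, I would split $\langle\Gt,\x_t-\z\rangle=\langle\Gt,\x_{t+1}-\z\rangle+\langle\Gt,\x_t-\x_{t+1}\rangle$ and substitute. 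This already produces the desired $\frac{1}{\mu_t}(\Dphi(\z,\x_{t+1})-\Dphi(\z,\x_t))$ term, so the remaining task reduces to showing
\begin{align*}
\frac{1}{\mu_t}\Dphi(\x_{t+1},\x_t)+\big\langle \Gt,\x_t-\x_{t+1}\big\rangle \ge F(\x_t)-F(\x_{t+1})-\frac{\eta}{2}\|\nabla F(\x_t)-\Gt\|_{*}^2.
\end{align*}

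Next I would control $F(\x_t)-F(\x_{t+1})$ by the quadratic upper bound coming from $L$-smoothness, namely $F(\x_t)-F(\x_{t+1})\le \langle\nabla F(\x_t),\x_t-\x_{t+1}\rangle+\frac{L}{2}\|\x_t-\x_{t+1}\|^2$. Writing $\vct{d}=\x_t-\x_{t+1}$, the difference $\langle\Gt,\vct{d}\rangle-\langle\nabla F(\x_t),\vct{d}\rangle=\langle\Gt-\nabla F(\x_t),\vct{d}\rangle$ appears, so it remains to verify
\begin{align*}
\frac{1}{\mu_t}\Dphi(\x_{t+1},\x_t)-\frac{L}{2}\|\vct{d}\|^2+\big\langle \Gt-\nabla F(\x_t),\vct{d}\big\rangle+\frac{\eta}{2}\|\nabla F(\x_t)-\Gt\|_{*}^2\ge 0.
\end{align*}
Here I would invoke the $1$-strong convexity of $\Phi$, which gives $\Dphi(\x_{t+1},\x_t)\ge\frac12\|\vct{d}\|^2$, together with the step-size restriction $\mu_t\le 1/(L+1/\eta)$, i.e.\ $1/\mu_t\ge L+1/\eta$. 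Combining the two yields $\frac{1}{\mu_t}\Dphi(\x_{t+1},\x_t)-\frac{L}{2}\|\vct{d}\|^2\ge\frac{1}{2\eta}\|\vct{d}\|^2$.

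Finally I would close the argument with Young's inequality in the form $\langle\Gt-\nabla F(\x_t),\vct{d}\rangle\ge -\frac{1}{2\eta}\|\vct{d}\|^2-\frac{\eta}{2}\|\Gt-\nabla F(\x_t)\|_{*}^2$, which follows from the generalized Cauchy--Schwarz bound $|\langle a,b\rangle|\le\|a\|_{*}\|b\|$ paired with $uv\le\frac{1}{2\eta}u^2+\frac{\eta}{2}v^2$ applied to $u=\|\vct{d}\|$, $v=\|\Gt-\nabla F(\x_t)\|_{*}$. Adding this to the previous bound, the $\frac{1}{2\eta}\|\vct{d}\|^2$ term generated by the step-size slack exactly cancels the one produced by Young's inequality, and the two noise terms cancel as well, leaving the desired $\ge 0$. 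This cancellation is the crux of the computation: the step-size condition $\mu_t\le 1/(L+1/\eta)$ is engineered precisely so that the three quadratic contributions in $\|\vct{d}\|^2$ (from smoothness, strong convexity, and Young's inequality) balance and the free parameter $\eta$ drops out cleanly. There is no deep obstacle here; the only care needed is the bookkeeping of which norm is primal and which is dual, since the Bregman divergence and the smoothness bound live in $\|\cdot\|$ while the gradient noise is measured in $\|\cdot\|_{*}$, and matching the quadratic terms so that $\eta$ cancels.
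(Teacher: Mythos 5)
Your proof is correct and uses exactly the same ingredients as the paper's argument --- Lemma~\ref{fu}, the quadratic smoothness bound, $1$-strong convexity of $\Phi$, Young's inequality with parameter $\eta$, and the step-size condition $\mu_t\le 1/(L+1/\eta)$ --- merely arranged in a different order (you start from the projection lemma and reduce to a local inequality, whereas the paper starts from smoothness and rearranges at the end). The two derivations are algebraically equivalent, so this is essentially the paper's proof.
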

\begin{proof}
Using smoothness of the function $F$ we have the following chain of inequalities
\begin{align*}
F(\x_{t+1})-F(\x_{t})&\ge \langle\nabla F(\x_t),\x_{t+1}-\x_t\rangle-\frac{L}{2}\|\x_{t+1}-\x_t\|^2\\
&=\langle\Gt,\x_{t+1}-\x_t\rangle+\langle \nabla F(\x_t)-\Gt,\x_{t+1}-\x_t\rangle-\frac{L}{2}\|\x_{t+1}-\x_t\|^2\\
&\overset{(a)}{\ge}\langle\Gt,\x_{t+1}-\x_t\rangle-\frac{\eta}{2}\|\nabla F(\x_t)-\Gt\|_{*}^2-\frac{1}{2}\left(L+\frac{1}{\eta}\right)\|\x_{t+1}-\x_t\|^2\\
&\overset{(b)}{\ge}\langle\Gt,\x_{t+1}-\x_t\rangle-\frac{\eta}{2}\|\nabla F(\x_t)-\Gt\|_{*}^2-\left(L+\frac{1}{\eta}\right)\Dphi(\x_{t+1},\x_t)\\
&=\langle\Gt,\z-\x_t\rangle+\langle\Gt,\x_{t+1}-\z\rangle-\frac{\eta}{2}\|\nabla F(\x_t)-\Gt\|_{*}^2-\left(L+\frac{1}{\eta}\right)\Dphi(\x_{t+1},\x_t),
\end{align*}
where (a) follows from the fact that $\langle\vct{a},\vct{b}\rangle\le \frac{1}{2\eta}\|\vct{a}\|^2+\frac{\eta}{2}\|\vct{b}\|_{*}^2$ by Young's inequality and (b) follows from strong convexity of the mirror map $\Phi$. Rearranging the above inequality we arrive at the following chain of inequalities
\begin{align*}
\langle\Gt,\x_t-\z\rangle\ge&\langle\Gt,\x_{t+1}-\z\rangle-\frac{\eta}{2}\|\nabla F(\x_t)-\Gt\|_{*}^2-\left(L+\frac{1}{\eta}\right)\Dphi(\x_{t+1},\x_t)+F(\x_t)-F(\x_{t+1})\\
\overset{(a)}{\ge}&\frac{1}{\mu_t}\left(\Dphi(\z,\x_{t+1})-\Dphi(\z,\x_t)\right)-\frac{\eta}{2}\|\nabla F(\x_t)-\Gt\|_{*}^2+\left(\frac{1}{\mu_t}-\left(L+\frac{1}{\eta}\right)\right)\Dphi(\x_{t+1},\x_t)\\
&+F(\x_t)-F(\x_{t+1})\\
\overset{(b)}{\ge}&\frac{1}{\mu_t}\left(\Dphi(\z,\x_{t+1})-\Dphi(\z,\x_t)\right)-\frac{\eta}{2}\|\nabla F(\x_t)-\Gt\|_{*}^2+F(\x_t)-F(\x_{t+1}),
\end{align*}
where (a) follows from Lemma \ref{fu} and (b) from the choice $\mu_t\le 1/\left(L+1/\eta\right)$.

\end{proof}

Using Lemma \ref{key} with $\z=\vct{x}^*$ (Global optimum) and $\eta=\eta_t$ we have
\begin{align*}
\big\langle \Gt, \x_t-\x^{*}\big\rangle\ge& \frac{1}{\mu_t} \left(\Dphi(\x^{*},\x_{t+1})-\Dphi(\x^{*},\x_{t})\right)+F(\x_t)-F(\x_{t+1})-\frac{\eta_t}{2}\|\nabla F(\x_t)-\Gt\|_{*}^2.
\end{align*}
Using $\big\langle \Gt, \x_t-\x^{*}\big\rangle=\big\langle \nabla F(\x_t), \x_t-\x^{*}\big\rangle+\big\langle \Gt-\nabla F(\x_t), \x_t-\x^{*}\big\rangle$ we conclude that
\begin{align*}
\big\langle \nabla F(\x_t), \x_t-\x^{*}\big\rangle\ge& \big\langle \Gt-\nabla F(\x_t), \x_t-\x^{*}\big\rangle\\
&+\frac{1}{\mu_t} \left(\Dphi(\x^{*},\x_{t+1})-\Dphi(\x^{*},\x_{t})\right)+F(\x_t)-F(\x_{t+1})-\frac{\eta_t}{2}\|\nabla F(\x_t)-\Gt\|_{*}^2.
\end{align*}
Using Lemma \ref{Vondrak} with $\y=\x^*$ and $\x=\x_t$ in the above inequality we conclude that
\begin{align*}
F(\x_t)+F(\x_{t+1})-F({\x}^*)\ge& \big\langle \Gt-\nabla F(\x_t), \x_t-\x^{*}\big\rangle\\
&+\frac{1}{\mu_t} \left(\Dphi(\x^{*},\x_{t+1})-\Dphi(\x^{*},\x_{t})\right)-\frac{\eta_t}{2}\|\nabla F(\x_t)-\Gt\|_{*}^2.
\end{align*}
Taking expectation of both sides we arrive at
\begin{align*}
\E F(\x_t)+\E F(\x_{t+1})-F({\x}^*)\ge& \frac{1}{\mu_t} \left(\E\Dphi(\x^{*},\x_{t+1})-\E\Dphi(\x^{*},\x_{t})\right)-\frac{\eta_t}{2}\E\big[\|\nabla F(\x_t)-\Gt\|_{*}^2\big],\\
\ge& \frac{1}{\mu_t} \left(\E\Dphi(\x^{*},\x_{t+1})-\E\Dphi(\x^{*},\x_{t})\right)-\frac{\eta_t}{2}\sigma^2.
\end{align*}
Summing both sides from $t=1$ to $T$ we conclude that
\begin{align*}
\sum_{t=1}^T\big[\E F(\x_t)+\E F(\x_{t+1})-F({\x}^*)\big]\ge& \sum_{t=1}^T \frac{1}{\mu_t} \left(\E\Dphi(\x^{*},\x_{t+1})-\E\Dphi(\x^{*},\x_{t})\right)-\frac{\sigma^2}{2}\sum_{t=1}^T\eta_t\\
=&\frac{\E\Dphi(\x^{*},\x_{T+1})}{\mu_T}-\frac{\E\Dphi(\x^{*},\x_{1})}{\mu_1}+\sum_{t=1}^{T-1} \E\Dphi(\x^{*},\x_{t+1})\left(\frac{1}{\mu_t}-\frac{1}{\mu_{t+1}}\right)\\
&-\frac{\sigma^2}{2}\sum_{t=1}^T\eta_t\\
\overset{(a)}{\ge}&-\frac{R^2}{\mu_1}+R^2\sum_{t=1}^{T-1}\left(\frac{1}{\mu_t}-\frac{1}{\mu_{t+1}}\right)-\frac{\sigma^2}{2}\sum_{t=1}^T\eta_t\\
=&-\frac{R^2}{\mu_T}-\frac{\sigma^2}{2}\sum_{t=1}^T\eta_t.
\end{align*}
Here, (a) follows from the fact that $\Dphi(\x^{*},\x_{t})\le R^2$. Now using $\eta_t=\frac{R}{\sigma\sqrt{t}}$ and $\mu_t=\frac{1}{L+\frac{1}{\eta_t}}$ we arrive at
\begin{align} \nonumber
\sum_{t=1}^T\big[\E F(\x_t)+\E F(\x_{t+1})-F({\x}^*)\big]\ge& - R^2\left(L+\frac{\sigma}{R}\sqrt{T}\right)-\frac{\sigma R}{2}\sum_{t=1}^T \frac{1}{\sqrt{t}}\\
\overset{(a)}{\ge} &- \left(R^2L+2\sigma R\sqrt{T}\right). \label{non-uniform}
\end{align}
Here, (a) follows from the fact that $\sum_{t=1}^T \frac{1}{\sqrt{t}}\le 2\sqrt{T}$. Thus
\begin{align*}
\sum_{t=1}^T 2\E [ F(\vct{x}_t) ]- \text{OPT}=&\sum_{t=1}^T\big[\E F(\x_t)+\E F(\x_{t+1})-F({\x}^*)\big]+\E [F(\vct{x}_1)]- \E[F(\vct{x}_{T+1})]\\
\ge&- \left(R^2L+2\sigma R\sqrt{T}\right) - \E[F(\vct{x}_{T+1})].
\end{align*}
Dividing both sides by $2T$, we obtain 
$$ \frac{1}{T}\sum_{t=1}^T \E [ F(\vct{x}_t) ]  + \frac{1}{2T}\E[F(\vct{x}_{T+1})] \geq \frac{\rm OPT}{2} - \left(\frac{R^2L}{T}+\frac{2\sigma R}{\sqrt{T}}\right).$$
The proof now follows from the fact that  $E[F(\vct{x}_{T+1})] \leq {\rm OPT}$. Note also that from \eqref{non-uniform} we have
$$ \frac{1}{T}\sum_{t=2}^T \E [ F(\vct{x}_t) ]  + \frac{1}{2T} (\E[F(\vct{x}_{1}) + \E[F(\vct{x}_{T+1})] )\geq \frac{\rm OPT}{2} - \left(\frac{R^2L}{T}+\frac{2\sigma R}{\sqrt{T}}\right).$$
Therefore, a different sampling of the $\vct{x}_t$'s (i.e. choose $\vct{x}_1, \vct{x}_{T+1}$ w.p. $1/(2T)$ and the rest with probability $1/T$--call this sampling $\tau'$) results in   
$\E[F(\vct{x}_{\tau'})] \geq \frac{\rm OPT}{2} - \left(R^2L+2\sigma R\sqrt{T}\right)$.
\subsection{Proof of (stochastic) gradient method (Proof of Theorem~\ref{thm:sgd})}
\label{proofofavg}
This proof is a special case of the proof of the previous section using the mapping $\Phi(\vct{x})=\frac{1}{2}\twonorm{\vct{x}}^2$.

\subsection{Extensions to weakly submodular functions}
\label{weaksub}
In this section we shall show that Theorem~\ref{thm:mirror} extends to weakly submodular functions with the new guarantee given by
\begin{align}
\label{optguaranteefin}
\E[F(\x_\tau)] \ge  \frac{\gamma^2}{1+\gamma^2}\rm{OPT}- \frac{\gamma}{1+\gamma^2}\left(\frac{R^2L+\text{OPT}}{2T}+\frac{R\sigma}{\sqrt{T}}\right).
\end{align}

To this aim using Lemma \ref{key} with $\z=\vct{x}^*$ (Global optimum) and $\eta=\eta_t$ we have
\begin{align*}
\big\langle \Gt, \x_t-\x^{*}\big\rangle\ge& \frac{1}{\mu_t} \left(\Dphi(\x^{*},\x_{t+1})-\Dphi(\x^{*},\x_{t})\right)+F(\x_t)-F(\x_{t+1})-\frac{\eta_t}{2}\|\nabla F(\x_t)-\Gt\|_{*}^2.
\end{align*}
Using $\big\langle \Gt, \x_t-\x^{*}\big\rangle=\big\langle \nabla F(\x_t), \x_t-\x^{*}\big\rangle+\big\langle \Gt-\nabla F(\x_t), \x_t-\x^{*}\big\rangle$ we conclude that
\begin{align*}
\big\langle \nabla F(\x_t), \x_t-\x^{*}\big\rangle\ge& \big\langle \Gt-\nabla F(\x_t), \x_t-\x^{*}\big\rangle\\
&+\frac{1}{\mu_t} \left(\Dphi(\x^{*},\x_{t+1})-\Dphi(\x^{*},\x_{t})\right)+F(\x_t)-F(\x_{t+1})-\frac{\eta_t}{2}\|\nabla F(\x_t)-\Gt\|_{*}^2.
\end{align*}
Using condition \eqref{mahdicond} with $\y=\x^*$ and $\x=\x_t$ in the above inequality we conclude that
\begin{align*}
\left(\gamma+\frac{1}{\gamma}-1\right)F(\x_t)+F(\x_{t+1})-\gamma F({\x}^*)\ge& \big\langle \Gt-\nabla F(\x_t), \x_t-\x^{*}\big\rangle\\
&+\frac{1}{\mu_t} \left(\Dphi(\x^{*},\x_{t+1})-\Dphi(\x^{*},\x_{t})\right)-\frac{\eta_t}{2}\|\nabla F(\x_t)-\Gt\|_{*}^2.
\end{align*}
Taking expectation of both sides we arrive at
\begin{align*}
\left(\gamma+\frac{1}{\gamma}-1\right)\E F(\x_t)+\E F(\x_{t+1})-\gamma F({\x}^*)\ge& \frac{1}{\mu_t} \left(\E\Dphi(\x^{*},\x_{t+1})-\E\Dphi(\x^{*},\x_{t})\right)-\frac{\eta_t}{2}\E\big[\|\nabla F(\x_t)-\Gt\|_{*}^2\big],\\
\ge& \frac{1}{\mu_t} \left(\E\Dphi(\x^{*},\x_{t+1})-\E\Dphi(\x^{*},\x_{t})\right)-\frac{\eta_t}{2}\sigma^2.
\end{align*}
Summing both sides from $t=1$ to $T$ we conclude that
\begin{align*}
\sum_{t=1}^T\bigg[\bigg(\gamma+\frac{1}{\gamma}-1\bigg)\E F(\x_t)+\E F(\x_{t+1})-&\gamma F({\x}^*)\bigg]\\
\ge& \sum_{t=1}^T \frac{1}{\mu_t} \left(\E\Dphi(\x^{*},\x_{t+1})-\E\Dphi(\x^{*},\x_{t})\right)-\frac{\sigma^2}{2}\sum_{t=1}^T\eta_t\\
=&\frac{\E\Dphi(\x^{*},\x_{T+1})}{\mu_T}-\frac{\E\Dphi(\x^{*},\x_{1})}{\mu_1}+\sum_{t=1}^{T-1} \E\Dphi(\x^{*},\x_{t+1})\left(\frac{1}{\mu_t}-\frac{1}{\mu_{t+1}}\right)\\
&-\frac{\sigma^2}{2}\sum_{t=1}^T\eta_t\\
\overset{(a)}{\ge}&-\frac{R^2}{\mu_1}+R^2\sum_{t=1}^{T-1}\left(\frac{1}{\mu_t}-\frac{1}{\mu_{t+1}}\right)-\frac{\sigma^2}{2}\sum_{t=1}^T\eta_t\\
=&-\frac{R^2}{\mu_T}-\frac{\sigma^2}{2}\sum_{t=1}^T\eta_t.
\end{align*}
Here, (a) follows from the fact that $\Dphi(\x^{*},\x_{t})\le R^2$. Now using $\eta_t=\frac{R}{\sigma\sqrt{t}}$ and $\mu_t=\frac{1}{L+\frac{1}{\eta_t}}$ we arrive at
\begin{align*}
\sum_{t=1}^T\bigg[\bigg(\gamma+\frac{1}{\gamma}-1\bigg)\E F(\x_t)+\E F(\x_{t+1})-\gamma F({\x}^*)\bigg]\ge& - R^2\left(L+\frac{\sigma}{R}\sqrt{T}\right)-\frac{\sigma R}{2}\sum_{t=1}^T \frac{1}{\sqrt{t}}\\
\overset{(a)}{\ge} &- \left(R^2L+2\sigma R\sqrt{T}\right).
\end{align*}
Here, (a) follows from the fact that $\sum_{t=1}^T \frac{1}{\sqrt{t}}\le 2\sqrt{T}$. Thus
\begin{align*}
\sum_{t=1}^T \bigg(\gamma+\frac{1}{\gamma}\bigg)\E [ F(\vct{x}_t) ]- \gamma\text{OPT}=&\sum_{t=1}^T\bigg[\bigg(\gamma+\frac{1}{\gamma}-1\bigg)\E F(\x_t)+\E F(\x_{t+1})-\gamma F({\x}^*)\bigg]+F(\vct{x}_1)-F(\vct{x}_{T+1})\\
\ge&- \left(R^2L+2\sigma R\sqrt{T}+\text{OPT}\right).
\end{align*}
Dividing both sides by $\left(\gamma+\frac{1}{\gamma}\right)T$ concludes the proof.

\section*{Acknowledgements}
This work was done while the authors were visiting the Simon's Institute for the Theory of Computing. The authors would like to thank Jeff Bilmes, Volkan Cevher, Maryam Fazel, Mohammad-Reza Karimi, Andreas Krause, Mario Lucic, and Andrea Montanari for helpful discussions.

\bibliography{references}

\begin{thebibliography}{10}

\bibitem{kempe03}
D.~Kempe, J.~Kleinberg, and E.~Tardos.
\newblock Maximizing the spread of influence through a social network.
\newblock In {\em KDD}, 2003.

\bibitem{das2011submodular}
A.~Das and D.~Kempe.
\newblock Submodular meets spectral: Greedy algorithms for subset selection,
  sparse approximation and dictionary selection.
\newblock {\em ICML}, 2011.

\bibitem{leskovec07}
J.~Leskovec, A.~Krause, C.~Guestrin, C.~Faloutsos, J.~Van~Briesen, and
  N.~Glance.
\newblock Cost-effective outbreak detection in networks.
\newblock In {\em KDD}, 2007.

\bibitem{gomez10}
R.~M. Gomez, J.~Leskovec, and A.~Krause.
\newblock Inferring networks of diffusion and influence.
\newblock In {\em Proceedings of KDD}, 2010.

\bibitem{guestrin2005near}
C.~Guestrin, A.~Krause, and A.~P. Singh.
\newblock Near-optimal sensor placements in gaussian processes.
\newblock In {\em ICML}, 2005.

\bibitem{el2009turning}
K.~El-Arini, G.~Veda, D.~Shahaf, and C.~Guestrin.
\newblock Turning down the noise in the blogosphere.
\newblock In {\em KDD}, 2009.

\bibitem{mirzasoleimanfast}
B.~Mirzasoleiman, A.~Badanidiyuru, and A.~Karbasi.
\newblock Fast constrained submodular maximization: Personalized data
  summarization.
\newblock In {\em ICML}, 2016.

\bibitem{lin2011class}
H.~Lin and J.~Bilmes.
\newblock A class of submodular functions for document summarization.
\newblock In {\em Proceedings of Annual Meeting of the Association for
  Computational Linguistics: Human Language Technologies}, 2011.

\bibitem{mirzasoleiman2013distributed}
B.~Mirzasoleiman, A.~Karbasi, R.~Sarkar, and A.~Krause.
\newblock Distributed submodular maximization: Identifying representative
  elements in massive data.
\newblock In {\em NIPS}, 2013.

\bibitem{singla2014near}
A.~Singla, I.~Bogunovic, G.~Bartok, A.~Karbasi, and A.~Krause.
\newblock Near-optimally teaching the crowd to classify.
\newblock In {\em ICML}, 2014.

\bibitem{KimKK16}
B.~Kim, O.~Koyejo, and R.~Khanna.
\newblock Examples are not enough, learn to criticize! criticism for
  interpretability.
\newblock In {\em NIPS}, 2016.

\bibitem{djolonga2014map}
J.~Djolonga and A.~Krause.
\newblock From map to marginals: Variational inference in bayesian submodular
  models.
\newblock In {\em NIPS}, 2014.

\bibitem{iyer2015submodular}
R.~Iyer and J.~Bilmes.
\newblock Submodular point processes with applications to machine learning.
\newblock In {\em Artificial Intelligence and Statistics}, 2015.

\bibitem{bach2015submodular}
F.~Bach.
\newblock Submodular functions: from discrete to continous domains.
\newblock {\em arXiv preprint arXiv:1511.00394}, 2015.

\bibitem{calinescu11maximizing}
G.~Calinescu, C.~Chekuri, M.~Pal, and J.~Vondrak.
\newblock Maximizing a submodular set function subject to a matroid constraint.
\newblock {\em SIAM Journal on Computing}, 2011.

\bibitem{bian2016guaranteed}
A.~Bian, B.~Mirzasoleiman, J.~M. Buhmann, and A.~Krause.
\newblock Guaranteed non-convex optimization: Submodular maximization over
  continuous domains.
\newblock {\em arXiv preprint arXiv:1606.05615}, 2016.

\bibitem{karimi17}
M.~Karimi, M.~Lucic, H.~Hassani, and A.~Krasue.
\newblock stochastic submodular maximization: The case for coverage functions.
\newblock 2017.

\bibitem{serban17}
S.~A. Stan, M.~Zadimoghaddam, A.~Krasue, and A.~Karbasi.
\newblock Probabilistic submodular maximization in sub-linear time.
\newblock {\em ICML}, 2017.

\bibitem{fujishige91}
S.~Fujishige.
\newblock {\em Submodular functions and optimization}, volume~58.
\newblock Annals of Discrete Mathematics, North Holland, Amsterdam, 2nd
  edition, 2005.

\bibitem{soma2015generalization}
T.~Soma and Y.~Yoshida.
\newblock A generalization of submodular cover via the diminishing return
  property on the integer lattice.
\newblock In {\em NIPS}, 2015.

\bibitem{chekuri2015multiplicative}
C.~Chekuri, T.~S. Jayram, and J.~Vondrak.
\newblock On multiplicative weight updates for concave and submodular function
  maximization.
\newblock In {\em Proceedings of the 2015 Conference on Innovations in
  Theoretical Computer Science}, pages 201--210. ACM, 2015.

\bibitem{edmonds1971matroids}
J.~Edmonds.
\newblock Matroids and the greedy algorithm.
\newblock {\em Mathematical programming}, 1(1):127--136, 1971.

\bibitem{lovasz1983submodular}
L{\'a}szl{\'o} Lov{\'a}sz.
\newblock Submodular functions and convexity.
\newblock In {\em Mathematical Programming The State of the Art}, pages
  235--257. Springer, 1983.

\bibitem{yantatlee}
D.~Chakrabarty, Y.~T. Lee, Sidford A., and S.~C.~W. Wong.
\newblock Subquadratic submodular function minimization.
\newblock In {\em STOC}, 2017.

\bibitem{chekuri11submodular}
C.~Chekuri, J.~Vondr\'{a}k, and R.s Zenklusen.
\newblock Submodular function maximization via the multilinear relaxation and
  contention resolution schemes.
\newblock In {\em Proceedings of the 43rd ACM Symposium on Theory of Computing
  (STOC)}, 2011.

\bibitem{soma2014optimal}
T.~Soma, N.~Kakimura, K.~Inaba, and K.~Kawarabayashi.
\newblock Optimal budget allocation: Theoretical guarantee and efficient
  algorithm.
\newblock In {\em ICML}, 2014.

\bibitem{gottschalk2015submodular}
C.~Gottschalk and B.~Peis.
\newblock Submodular function maximization on the bounded integer lattice.
\newblock In {\em International Workshop on Approximation and Online
  Algorithms}, 2015.

\bibitem{ene2016reduction}
A.~Ene and H.~L. Nguyen.
\newblock A reduction for optimizing lattice submodular functions with
  diminishing returns.
\newblock {\em arXiv preprint arXiv:1606.08362}, 2016.

\bibitem{nesterov2013introductory}
Yurii Nesterov.
\newblock {\em Introductory lectures on convex optimization: A basic course},
  volume~87.
\newblock Springer Science \& Business Media, 2013.

\bibitem{bubeck2015convex}
S.~Bubeck.
\newblock Convex optimization: Algorithms and complexity.
\newblock {\em Foundations and Trends{\textregistered} in Machine Learning},
  8(3-4):231--357, 2015.

\bibitem{brucker1984n}
P.~Brucker.
\newblock An $\mathcal{O}(n)$ algorithm for quadratic knapsack problems.
\newblock {\em Operations Research Letters}, 3(3):163--166, 1984.

\bibitem{pardalos1990algorithm}
P.~M. Pardalos and N.~Kovoor.
\newblock An algorithm for a singly constrained class of quadratic programs
  subject to upper and lower bounds.
\newblock {\em Mathematical Programming}, 46(1):321--328, 1990.

\bibitem{oymak2015sharp}
S.~Oymak, B.~Recht, and M.~Soltanolkotabi.
\newblock Sharp time--data tradeoffs for linear inverse problems.
\newblock {\em arXiv preprint arXiv:1507.04793}, 2015.

\bibitem{soltanolkotabi2017structured}
M.~Soltanolkotabi.
\newblock Structured signal recovery from quadratic measurements: Breaking
  sample complexity barriers via nonconvex optimization.
\newblock {\em arXiv preprint arXiv:1702.06175}, 2017.

\bibitem{iyer2013submodular}
R.~K. Iyer and J.~A. Bilmes.
\newblock Submodular optimization with submodular cover and submodular knapsack
  constraints.
\newblock In {\em Advances in Neural Information Processing Systems}, pages
  2436--2444, 2013.

\bibitem{vondrak2011submodular}
J.~Vondrak, C.~Chekuri, and R.~Zenklusen.
\newblock Submodular function maximization via the multilinear relaxation and
  contention resolution schemes.
\newblock In {\em Proceedings of the forty-third annual ACM symposium on Theory
  of computing}, pages 783--792. ACM, 2011.

\end{thebibliography}
\bibliographystyle{unsrt} 

\appendix
\section{A DR-Submodular Function that Attains $\rm{OPT}/2 + \epsilon$ on a local maximum}\label{exmponehalf}
We first define a (coverage) submodular set function $f : 2^V \to \mathbb{R}_+$ and then show that the multilinear extension of $f$ has the desired property.  Let $V = \{1,2,\ldots, 2k+1\}$. We consider the following subsets of $V$: For $i \in \{1,2,\ldots,k\}$ let $S_i = \{i, 2k+1\}$, for $i\in\{k+1, \ldots, 2k\}$ define $S_i = \{i\}$, and finally let $S_{2k+1} = \{1,\ldots,k\} \cup \{2k+1\}$. The submodular set function $f: 2^V \to \mathbb{R}_+$ is then defined as $f(A) = | \cup_{i \in A} S_i | $ for $A \subseteq V$. It is not hard to show that $f$ is  monotone and submodular  ($f$ is a coverage function). Let $F: [0,1]^{2k+1} \to  \mathbb{R}_+$ be the multilinear extension of $f$. We can write

\begin{align*}
F(\vct{x}) &= \sum_{A \subseteq V} f(A) \prod_{i \in A} x_i \prod_{i \notin A} (1 - x_i) \\
&= k+1 - (1-x_{2k+1})\prod_{i=1}^k (1-x_i) -  (1-x_{2k+1}) (k - \sum_{i=1}^k x_i)  +  \sum_{i=k+1}^{2k}x_i
\end{align*}    
Now, define $\mathcal{K} = \{\vct{x} \in [0,1]^{2k+1}: \sum_{i=1}^{2k+1} x_i = k\}$. We claim that $\vct{x}_{\rm loc} = (\overbrace{1,1, \ldots, 1}^{k}, 0, 0, \ldots, 0)$ is a local maximum.  To see this, we have 
$\nabla F(\vct{x}_{\rm loc} ) = (\overbrace{1,1\cdots,1}^{2k},0)$. As a result, for any $y \in \mathcal{K}$:
$$\langle \nabla F(\vct{x}_{\rm loc} ), y -  \vct{x}_{\rm loc}  \rangle= \sum_{i=1}^{2k} y_i- k \leq 0. $$
As a result, $ \vct{x}_{\rm loc} $ is a stationary point. It remains to show that in a sufficiently small neighborhood of $\vct{x}_{\rm loc}$ inside $\mathcal{K}$, $\vct{x}_{\rm loc} $ becomes the maximizer of $F$. Note that $F( \vct{x}_{\rm loc} ) = k+1$.  Consider a point 
\begin{equation} \label{local-y}
 y  = (1-\epsilon_1, \cdots, 1 - \epsilon_k, \epsilon_{k+1}, \epsilon_{k+2}, \cdots, \epsilon_{2k+1}), \text{ where }  \epsilon_i \in [0,\epsilon] \text{ and } \sum_{i=1}^k \epsilon_i = \sum_{j=k}^{2k+1} \epsilon_j .
 \end{equation}
It is easy to see that $y \in \mathcal{K}$. We have
\begin{align*}
F(y) - F(\vct{x}_{\rm loc}) &= \sum_{j=k+1}^{2k} \epsilon_j - (1-\epsilon_{2k+1}) \prod_{i=1}^k \epsilon_i - (1-\epsilon_{2k+1}) (k - \sum_{i=1}^k (1-\epsilon_i)) \\
& = \sum_{i=1}^k \epsilon_i - \epsilon_{2k+1} - (1-\epsilon_{2k+1}) ( \prod_{i=1}^k\epsilon_i + \sum_{i=1}^k \epsilon_i) \\
& \leq \epsilon_{2k+1} ( \sum_{i=1}^k \epsilon_i - 1) \\
& \leq \epsilon_{2k+1} ( k \epsilon - 1)  
\end{align*} 
Thus, by choosing $\epsilon \leq k$, we conclude that any $y$ with form as in \eqref{local-y} has a lower function value than $\vct{x}_{\rm loc}$. This proves that  $\vct{x}_{\rm loc}$ is a local maximum. 
Now, consider the vector $\vct{x}^* = (0,0,\cdots,0, \overbrace{1,1,\cdots,1}^{k+1})$. We have $F(\vct{x}_{\rm loc})/F(\vct{x}^*) = 1/2 + 1/(2k)$. As a result, by considering a large enough $k$, the value of $F$ at the  local maximum $\vct{x}_{\rm loc}$ becomes $\rm{OPT}/2 + \epsilon$. 

\section{An Example for Deficiency of the Frank-Wolfe Type Algorithm of \cite{bian2016guaranteed} in the Stochastic Setting} \label{bad}
Assume we want to maximize a DR-Submodular function $F$ over a convex set $\mathcal{K}$. Assume further that $0 \in \mathcal{K}$. The Frank-Wolfe Type algorithm discussed in  \cite{bian2016guaranteed} can be briefly stated as follows (note that for simplicity we let $\alpha=1$ and $\delta = 0$, see Algorithm 1 in \cite{bian2016guaranteed}): Fix a (large) number $T$ as the total number of iterations, let $\vct{x}_0 = 0$ and for $t  < T$ do: 
\begin{equation} \label{andrew}
\vct{x}_{t+1} = x_t + \frac{1}{T} \underset{v \in \mathcal{K}}{\arg\max} \langle v, \nabla F(\vct{x}_t) \rangle.
\end{equation}
When we have access to the gradients $\nabla F (\vct{x}_t)$, it is shown in \cite{bian2016guaranteed} that this algorithm achieves $F(\vct{x}_T) \geq (1-1/e) {\rm OPT}$ for large $T$. Assume now that we have only access to $\vct{g}_t$ which is an unbiased estimator of $\nabla F (\vct{x}_t)$. A simple stochastic version of the above algorithm would be to replace the gradient term $\nabla F (\vct{x}_t)$ in \eqref{andrew}  with $\vct{g}_t$. Here, we provide a simple example to show that this stochastic version can perform arbitrary poorly.  Fix an integer $n$ and consider $n-1$ functions $F_i :[0,1]^n \to \mathbb{R}_+$, $i \in \{1,\cdots,n-1\}$, defined as $F_i(x) = \sum_{j=1}^n m_{i,j} x_j$. We further let $m_{i,i} = 1$, $m_{i,n} = 1/2$, and the rest of $m_{i,j}$'s are $0$. Finally we let $F(x) = \mathbb{E}_{i \sim U}[F_i(x)]$, where $U$ is assumed to be the uniform distribution over $\{1,\cdots,n-1\}$. We want to maximize $F$ over the polytope $\mathcal{K} = \{x: \sum_{i=1}^n x_i \leq 1; x_i \geq 0\}$. 

Assume now that instead of $\nabla F(x)$ we have access to an unbiased estimator $\nabla F_i(x)$ where $i \sim U$. Note that  $\nabla F_i(x) = (m_{i,1}, m_{i,2}, \cdots, m_{i,n})$. As a result, for $i \in \{1,\cdots, n-1\} $we obtain $\arg\max_{v \in \mathcal{K}} \langle \nabla F_i(x), v \rangle = e_i $, where $e_i$ is the vector that has $1$ at position $i$ and $0$ elsewhere. Interestingly for this example, the stochastic Frank-Wolf algorithm never makes any progress on the $n-th$ coordinate and $\vct{x}_t$ will always take $0$ on the $n$-th coordinate. As a result, it is easy to see that for large $T$ the algorithm will end up at $\vct{x}_\infty = (1/(n-1), 1/(n-1), \cdots, 1/(n-1),0)$. However, we have $\vct{x}^* = (0,0,\cdots, 0, 1)$ and $F(\vct{x}_\infty)/F(\vct{x}^*) = 2/(n-1)$ which can become arbitrarily small with $n$.  

Let us briefly explain why conditional gradient methods do not easily admit stochastic variants. The main bottleneck is in the update step of the continuous greedy algorithm (FW). As stated above, in each iteration, FW finds a point in the constraint set $\mathcal{K}$ which has the highest inner product with the gradient and then uses this vector in order to update the current position. However, this step is not very robust to the noise. More precisely, if instead of the gradient of $F$ we plug into the $\arg\max$ a noisy (and unbiased) version of the gradient, the outcome may be far from $\vct{v}_t$. In other words, expectation and $\arg\max$ are not interchangeable. It is easy to see that the above example extends to FW with any fixed natch size (i.e. when gradient is approximated by averaging a fixed number of i.i.d. samples). 

\section{DR-submodular Functions with Large Smoothness Parameter in $\ell_2$ But Reasonable Smoothness Parameter in $\ell_1$}
 \label{example-smooth}
 Consider a submodular set function $f : 2^V \to \mathbb{R}$ and denote its mulilinear extension by $F:[0,1]^n \to \mathbb{R}_+$ (we also let $n \triangleq |V|$). Let us first  investigate how smooth is $F$ under the $\ell_2$ norm. At $\vct{x} = \vct{0}$ we have $[\nabla^2 F(\vct{0})]_{i,j} = f(\{i,j\}) - f(\{i\}) - f(\{j\})$. Thus, for $\vct{y} = (1,1,\cdots,1)/\sqrt{n}$ with $||\vct{y}||_{\ell_2} =1$ we have
$| y^T \nabla^2 F(\vct{0}) y |  = 1/n \sum_{i,j} (f(\{i\}) + f(\{j\}) - f(\{i,j\}) )$. One can easily construct a function $f$ such that this sum takes value $O(n)$ (also many functions in practice  have this property). As a result, $F$ can be $O(n)$-smooth. However, $F$ may become reasonably smooth in $\ell_1$ norm (with smoothness parameter that is independent of the dimension). 
 \begin{lemma}
 For a monotone submodular function $f$, let $m_f$ its denote maximum singleton value of $f$ by $m$, i.e., $m_f \triangleeq \max_{j \in V} f(\{j\})$. 
 Then, the multilinear extension $F$ is $m_f$-smooth under the $\ell_1$ norm. 
 \end{lemma}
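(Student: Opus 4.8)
The plan is to reduce the smoothness statement to a uniform, entrywise bound on the Hessian of $F$ and then to control those entries using submodularity together with monotonicity. Recall first that $F$ is $L$-smooth with respect to a norm $\|\cdot\|$ whenever the Hessian operator norm obeys $\|\nabla^2 F(\vct{x})\|_{\|\cdot\|\to\|\cdot\|_*}\le L$ for every $\vct{x}$: writing $\nabla F(\vct{y})-\nabla F(\vct{x})=\int_0^1 \nabla^2 F(\vct{x}+t(\vct{y}-\vct{x}))(\vct{y}-\vct{x})\,dt$ and applying the triangle inequality in the dual norm gives $\|\nabla F(\vct{y})-\nabla F(\vct{x})\|_*\le L\|\vct{y}-\vct{x}\|$. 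For the $\ell_1$ norm the dual norm is $\ell_\infty$, and the induced $\ell_1\to\ell_\infty$ operator norm of any matrix $M$ equals its largest absolute entry, i.e. $\|M\|_{\ell_1\to\ell_\infty}=\max_{i,j}|M_{ij}|$ (for $\|\vct{v}\|_{\ell_1}\le 1$ each coordinate of $M\vct{v}$ is a convex combination of the entries in a row). Hence it suffices to prove that $\max_{i,j}\big|[\nabla^2 F(\vct{x})]_{ij}\big|\le m_f$ for every $\vct{x}\in[0,1]^n$.

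Next I would compute the Hessian entries explicitly using the probabilistic form $F(\vct{x})=\mathbb{E}[f(R)]$, where $R$ is the random set containing each element $i$ independently with probability $x_i$. Since the multilinear extension is affine in each coordinate, every diagonal entry $\partial^2 F/\partial x_i^2$ vanishes identically and trivially satisfies the bound. For $i\neq j$, differentiating in $x_i$ and $x_j$ eliminates the dependence on those two coordinates and yields
\[
\frac{\partial^2 F(\vct{x})}{\partial x_i\partial x_j}
=\mathbb{E}_R\big[\,f(R\cup\{i,j\})-f(R\cup\{i\})-f(R\cup\{j\})+f(R)\,\big],
\]
where now $R$ ranges over subsets of $V\setminus\{i,j\}$ drawn according to the remaining coordinates of $\vct{x}$.

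The heart of the argument is a pointwise bound on the discrete second difference inside the expectation. I would rewrite its negative as a difference of two marginal gains,
\[
\big[f(R\cup\{i\})-f(R)\big]-\big[f(R\cup\{i,j\})-f(R\cup\{j\})\big],
\]
which is nonnegative by submodularity (the marginal value of $i$ cannot increase once $j$ is present) and is at most the first bracket $f(R\cup\{i\})-f(R)$ by monotonicity of the second bracket. Diminishing returns then give $f(R\cup\{i\})-f(R)\le f(\{i\})-f(\emptyset)\le f(\{i\})\le m_f$, using $f(\emptyset)\ge 0$. Taking absolute values and then expectations yields $\big|[\nabla^2 F(\vct{x})]_{ij}\big|\le m_f$, which combined with the vanishing diagonal completes the entrywise bound and hence the $m_f$-smoothness claim in the $\ell_1$ norm.

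The main obstacle I anticipate is the bookkeeping rather than any deep difficulty: one must verify that the off-diagonal second derivative really is the expected second difference (and that the diagonal vanishes by multilinearity), and then chain the submodularity and monotonicity inequalities in the correct directions so that the second difference is sandwiched between $0$ and a single marginal gain. The two ingredients that make the reduction clean---the integral representation of smoothness and the identification of the $\ell_1\to\ell_\infty$ operator norm with the maximum absolute entry---are standard and require no additional work.
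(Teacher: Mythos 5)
Your proof is correct and follows essentially the same route as the paper: both arguments reduce to the entrywise Hessian bound $-m_f \le [\nabla^2 F(\vct{x})]_{i,j} \le 0$, obtained exactly as you do by writing the off-diagonal entry as an (expected) second difference of $f$ and sandwiching it using submodularity, monotonicity, and $f(\emptyset)\ge 0$. The only difference is cosmetic and sits in the last step: the paper converts the entrywise bound into the quadratic-form estimate $|\vct{y}^{T}\nabla^2 F(\vct{x})\vct{y}|\le m_f\|\vct{y}\|_{\ell_1}^2$, whereas you bound the $\ell_1\to\ell_\infty$ operator norm by $\max_{i,j}|[\nabla^2 F(\vct{x})]_{i,j}|$, which verifies the gradient-Lipschitz (dual-norm) definition of smoothness directly.
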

 Before proving the lemma, let us remark that in many practical applications, the value of $m_f$ is not so large (see for example the movie recommendation setting of Section~\ref{sec:experiments} where $m_f$ is less than the maximum possible rating).

 \begin{proof}
 At any point $\vct{x} \in[0,1]^n$, the Hessian of $F$, denoted by $\nabla^2 F(x)$, has the following property (see \cite{calinescu11maximizing}):
\begin{align*}
[\nabla^2 F(\vct{x})]_{i,j}& = \frac{\partial^2 F (\vct{x})}{\partial x_i \partial x_j} \\
&=  F(\vct{x}; x_i,x_j \leftarrow 1) + F(\vct{x}; x_i,x_j \leftarrow 0) - F(\vct{x}; x_i \leftarrow 1, ,x_j \leftarrow 0) -   F(\vct{x}; x_i \leftarrow 0, ,x_j \leftarrow 1)  \\
& \stackrel{(a)}{\geq} - \max\{f(\{i\}) , f(\{j\}) \} \geq - m_f,
\end{align*}
where for example by $(\vct{x}; x_i,x_j \leftarrow 1)$ we mean a vector which has value $1$ on its $i$-th and $j$-th coordinate and is equal to $\vct{x}$ elsewhere. Also, (a) is a direct consequence of the submodularity of $f$. As a result, each element of the Hessian is negative but greater $-m$ and for any vector $\vct{y} \in \mathbb{R}^n$ we have $|\vct{y}^T \nabla^2 F(\vct{x}) \vct{y} | \leq m_f || \vct{y} ||_{\ell_1}^2$. Hence, $F$ is $m_f$-smooth under the $\ell_1$ norm. 
\end{proof}

\section{How to Construct an Unbiased Estimator of the Gradient in Multilinear Extensions}\label{unbiased}
Recall that $F(\vct{x}) = \mathbb{E}_{\theta \sim \mathcal{D}} [F_\theta(\vct{x})]$. So $\nabla F_\theta (\vct{x})$ is an unbiased estimator of $\nabla F(\vct{x})$ when $\theta \sim \mathcal{D}$. Note that $F_\theta$ is a multilinear extension. It remains to provide an unbiased estimator for a generic multilinear extension  $G (\vct{x})$. We have $G(\vct{x}) = \sum_{S\subseteq V} \prod_{i\in S} x_i \prod_{j\not\in S} (1-x_j) g(S)$ where $g$ is a set function. Now, it can easily be shown that 
$$\frac{\partial G}{\partial x_i} = G(\vct{x}; x_i \leftarrow 1) - G(\vct{x}; x_i \leftarrow 0).$$
where for example by $(\vct{x}; x_i \leftarrow 1)$ we mean a vector which has value $1$ on its $i$-th coordinate and is equal to $\vct{x}$ elsewhere. To create an unbiased estimator for $\frac{\partial G}{\partial x_i} $ at a point $\vct{x}$ we can simply sample a set $S$ by including each element in it independently with probability $x_i$ and use $g(S \cup \{i\}) - g(S \setminus \{i\})$ as an unbiased estimator for the $i$-th partial derivative. We can sample one single $S$ set and use the above trick for all the coordinates.  This involves $n$ function computations for $g$. Having a batch size $B$ we can repeat this procedure $B$ times and then average.  
\end{document}